\tikzset{
  treenode/.style = {circle,draw, align=center,
                     top color=white, bottom color=blue!20},
  special/.style     = {treenode, font=\ttfamily\normalsize, bottom color=red!30},
  dummy/.style      = {treenode, font=\ttfamily\normalsize}
}
\newcommand{\E}{\mathbb{E}}
\newcommand{\R}{\ensuremath{\mathbb{R}}}
\newcommand{\alg}{\ensuremath{\mathcal{A}}}
\newtheoremstyle{definition}
  {}
  {}
  {\itshape}
  {}
  {\bfseries}
  {.}
  { }
  {\thmname{#1}\thmnumber{ #2}\thmnote{ (#3)}}
  \newtheoremstyle{theorem}
  {}
  {}
  {\itshape}
  {}
  {\bfseries}
  {.}
  { }
  {\thmname{#1}\thmnumber{ #2}\thmnote{ (#3)}}
\theoremstyle{theorem}
\newtheorem{theorem}{Theorem}[section]
\newtheorem{proposition}[theorem]{Proposition}
\theoremstyle{definition}
\newcommand{\Scal}{\ensuremath{\mathcal{S}}} 
\newcommand{\Acal}{\ensuremath{\mathcal{A}}}
\newcommand{\poly}{\textrm{poly}}
\newcommand{\dist}{\textrm{dist}}
\newcommand{\cnf}{\textsc{3-CNF}}
\newcommand{\sat}{\textsc{3-Sat}}
\newcommand{\ksat}{$k$\textsc{-Sat}}
\newcommand{\usat}{\textsc{Unique-3-Sat}}
\newcommand{\linear}[1]{\textsc{Linear-#1-RL}}
\newcommand{\klinear}{\textsc{Linear-}k\textsc{-RL}}
\newcommand{\reth}{\text{rETH}}
\newcommand{\npc}{\textsc{NP}}
\newcommand{\rpc}{\textsc{RP}}
\newcommand{\lp}{\left}
\newcommand{\rp}{\right}
\newmdenv[
  topline=false,
  bottomline=false,
  rightline=false,
  linewidth=0.8pt,
  skipabove=\topsep,
  skipbelow=\topsep,
  innertopmargin=2pt,
  innerbottommargin=0pt
]{siderules}
\title{Computational-Statistical Gaps in Reinforcement Learning}
\author{%
  Daniel Kane\thanks{Supported by NSF Award CCF-1553288 (CAREER) and a Sloan
  Research Fellowship.}\\
  University of California, San Diego\\
  \texttt{dakane@eng.ucsd.edu}
    \and
    Sihan Liu\\
    University of California, San Diego\\
    \texttt{sil046@ucsd.edu}
    \and
    Shachar Lovett\thanks{Supported by NSF Award CCF-1909634.}\\
  University of California, San Diego\\
  \texttt{slovett@cs.ucsd.edu}
    \and
    Gaurav Mahajan\\
    University of California, San Diego\\ 
    \texttt{gmahajan@eng.ucsd.edu}
}
\date{\today}
\newcommand*{\rom}[1]{\expandafter\@slowromancap\romannumeral #1@}
\begin{document}
\maketitle

\newcommand{\gm}[1]{\textsf{\color{magenta}{GM: #1}}}
\newcommand{\shachar}[1]{[\textsf{\color{red}{SL: #1}}]}
\newcommand{\sihan}[1]{[\textsf{\color{red}{Sihan: #1}}]}
\newtheorem{fact}[theorem]{Fact}
\begin{abstract}
    Reinforcement learning with function approximation has recently achieved tremendous results in applications with large state spaces. This empirical success has motivated a growing body of theoretical work proposing necessary and sufficient conditions under which efficient reinforcement learning is possible. From this line of work, a remarkably simple minimal sufficient condition has emerged for sample efficient reinforcement learning: MDPs with optimal value function $V^*$ and $Q^*$ linear in some known low-dimensional features. In this setting, recent works have designed sample efficient algorithms which require a number of samples polynomial in the feature dimension and independent of the size of state space. They however leave finding computationally efficient algorithms as future work and this is considered a major open problem in the community.
    
    In this work, we make progress on this open problem by presenting the first computational lower bound for RL with linear function approximation: unless NP=RP, no randomized polynomial time algorithm exists for deterministic transition MDPs with a constant number of actions and linear optimal value functions. To prove this, we show a reduction from \textsc{Unique-Sat}, where we convert a CNF formula into an MDP with deterministic transitions, constant number of actions and low dimensional linear optimal value functions. This result also exhibits the first computational-statistical gap in reinforcement learning with linear function approximation, as the underlying statistical problem is information-theoretically solvable with a polynomial number of queries, but no computationally efficient algorithm exists unless NP=RP.
   Finally, we also prove a quasi-polynomial time lower bound under the Randomized Exponential Time Hypothesis.
\end{abstract}


\section{Introduction}
Function approximation has a long history in reinforcement learning \citep{tsitsiklis1996feature,Bertsekas2009,munos2008finite} and game playing \citep{shannon1950xxii,tesauro1995temporal}. More recently, this merger of reinforcement learning's algorithmic techniques with supervised learning's generalization schemes has achieved tremendous results in various applications with large state spaces, in areas such as game playing \citep{mnih2013playing,silver2017mastering,openai}, robotics \citep{kober2013reinforcement} and biology \citep{fold21}. Since, one would expect the statistical and computational demand for these algorithms to grow at least linearly with the size of the state space \citep{jaksch2010near}, it is quite surprising that these algorithms generalize so well in large state spaces. That said, the computational requirements for existing algorithms have become exceedingly high. For example, AlphaZero was trained on 5000 tensor processing units (TPUs) for 13 days \citep{Silver1140} and OpenAI Five trained its \textsc{Dota2} bots using 128000 CPUs \citep{openai} for 180 days (10 months in real time). This leads to a natural fundamental question: are such data and compute requirements fundamental or can we design efficient algorithms for these applications? More generally: \emph{what minimal properties of environments leads to efficient RL algorithms?}

Over the last decade, this question has driven a growing body of theoretical work showing when \emph{sample efficiency} is possible in RL for particular model classes, such as State
Aggregation \citep{lihong2009disaggregation,dong2020provably}, Linear
MDPs \citep{yang2019sample,jin2019provably}, Linear Mixture
MDPs \citep{modi2019sample,ayoub2020model,zhou2021provably}, Reactive
POMDPs \citep{krishnamurthy2016pac}, Block
MDPs \citep{du2019provably}, FLAMBE \citep{agarwal2020flambe},
Reactive PSRs \citep{littman2001predictive}, Linear Bellman
Complete \citep{munos2005error,zanette2020learning}. More generally, there are also a few lines of work which propose general frameworks, consisting of structural conditions which permit sample efficient RL; these include the Bellman rank \citep{jiang2016contextual}, Witness rank \citep{sun2018model}, Bilinear Classes \citep{bilinear2021} and Bellman Eluder \citep{jin2021bellman}. The goal in these latter works is to develop a unified theory of generalization in RL, analogous to the more classical notions in statistical complexity (e.g. VC-theory and Rademacher complexity) relevant for supervised learning. 

A surprisingly minimal assumption which arose from these works is Linear $Q^*\&V^*$ \citep{bilinear2021} where both optimal value function $V^*$ and optimal action-value function $Q^*$ are linear in some known low-dimensional features. \cite{bilinear2021} showed that in this setting, there exists sample efficient RL algorithms which \emph{regardless of the number of actions} require a number of samples polynomial in the feature dimension and independent of the size of the state space. However, when only either $V^*$ or $Q^*$ are linear, a series of works \citep{weisz2020exponential,wang2021exponential,weisz2021tensorplan,foster2021statistical} showed that a phase transition occurs as one increases the number of actions: sample efficient algorithms exist for constant number of actions, and quickly transform into information theoretic exponential lower bounds as the number of actions exceeds the dimension of the features underlying $Q^*$ or $V^*$. 

Even though we have made considerable progress in understanding the minimal assumptions from the statistical perspective, the computational aspect of this problem is largely unknown. All the settings mentioned above (except under strong assumptions like linear transitions \citep{jin2019provably} and deterministic rewards \citep{wen2013efficient}) do not have computationally efficient algorithms and previous works \citep{jiang2016contextual,bilinear2021,weisz21sl1} leave designing computationally efficient algorithms as an important open problem. On the other hand, in spite of failed search for such computationally efficient algorithms over the last few years, there are no computational lower bounds for any of these settings (although previous attempts \citep{low1} have shown inefficiency of specific algorithms). A case of particular interest is RL under linear function approximation with constant many actions, which includes linear $Q^*$ \citep{weisz2021tensorplan}, linear $V^*$ \citep{weisz2021queryefficient}, linear $Q^*\& V^*$ \citep{bilinear2021} and linear $Q^*\& V^*$ (reachable states) \citep{weisz2021tensorplan}. In all these settings, we have statistically efficient algorithms when the number of actions are $O(1)$, but all the algorithms \cite{du2020agnostic,bilinear2021,weisz2021tensorplan} take times either exponential in $d$ or $H$. Designing polynomial time algorithms for any of these settings is considered a major open problem in the community.

\subsection{Our Contributions} 
In this work, we present the first computational lower bounds for RL with linear function approximation. Before stating our main results, we first need to state some key definitions that we use throughout the paper.

\paragraph{Markov Decision Process (MDP).} We first define the framework for reinforcement learning, a Markov Decision Process (MDP). We define a deterministic MDP as a tuple $M = \left(\Scal,\Acal, R, P\right)$, where $\Scal$ is the state space, $\Acal$ is the action space, $R:\Scal\times\Acal\mapsto \Delta([0,1])$ is the stochastic reward function \footnote{$\Delta([0,1])$ denotes the set of all distributions over interval $[0,1]$.}, and $P:\Scal\times\Acal\mapsto \Scal$ is the deterministic transition function. An MDP $M$ defines a discrete time sequential decision process where the agent starts from a starting state $s_0\in \Scal$. Then, at each time $t$, the agent at some current state $S_t$, takes action $A_t$, receiving reward $R_t \sim R(S_t, A_t)$ and transitions to next state $S_{t+1}$. This goes on till the agent reaches the end state $\bot$. Each such trajectory/path from starting state $s_0$ to end state $\bot$ is of length at most horizon $H$.
A deterministic, stationary policy $\pi:\Scal \mapsto \Acal$ specifies a decision-making strategy in which the agent chooses actions adaptively based on the current state, i.e. $A_t = \pi(S_t)$.
Given a policy $\pi$ and a state-action pair $(s,a) \in \Scal \times \Acal$, the $Q$-function and $V$-function under a policy $\pi$ are defined as
\begin{equation}
    \label{eq:value}
    V^\pi(s)=\E\left[\sum_{t = 0}^{\tau-1}R(S_{t},A_{t})\mid S_0 =s, \pi\right],\quad Q^\pi(s,a) = \E\left[\sum_{t = 0}^{\tau - 1}R(S_{t},A_{t})\mid S_0 =s, A_0 = a, \pi\right],
\end{equation}
    where $S_{1}, A_{1}, \ldots S_{\tau-1}, A_{\tau - 1}$ are obtained by executing policy $\pi$ in the MDP $M$ and $\tau$ is the first time when policy $\pi$ reaches the end state $\bot$, that is $S_\tau = \bot$ where it always holds that $\tau \le H$. We use $Q^*$ and $V^*$ to denote the optimal value functions \[
        V^*(s) = \sup_{\pi} V^\pi(s)\, , \quad  Q^*(s,a) = \sup_{\pi} Q^\pi(s,a)\, , \quad s \in \Scal, a \in \Acal
    \] 
    We say that the optimal value functions $V^*$ and $Q^*$ can be written as a linear function of $d$-dimensional features $\psi \colon \Scal \cup (\Scal \times \Acal) \to \R^d$ if for all state $s$ and action $a$, $V^*(s) = \langle \theta, \psi(s)\rangle$ and $Q^*(s,a) = \langle \theta, \psi(s,a)\rangle$ for some fixed $\theta \in \R^d$ independent of $s$ and $a$.

    \paragraph{Computational Problems.} We next introduce \sat, a satisfiability problem for \cnf~formulas. In a \sat~problem, we are given as input, a \cnf~formula $\varphi$ with $v$ variables and $O(v)$ clauses and our goal is to decide if $\varphi$ is satisfiable. Our computational lower bound is based on a reduction from \usat, a variant of $3$-SAT. \usat~is the promise version of \sat~where the given formula is promised to have either $0$ or $1$ satisfying assignments.

The focus of this work is the computational RL problem, \klinear.  In a \klinear~problem with feature dimension $d$, we are given access to a deterministic MDP $M$ with $k$ actions and horizon $H = O(d)$ such that the optimal value functions $Q^*$ and $V^*$ can be written as a linear function of $d$-dimensional features $\psi$. Our goal is to output a good policy, which we define as any policy $\pi$ that satisfies $V^\pi > V^* - 1/4$. Note that here $V^\pi$ and $V^*$ refers to the value of the policy $\pi$ and optimal policy respectively at the starting state and is always in $[0,H]$ \footnote{in our constructions, we satisfy the more stringent condition that $V^* \in [0,1]$.}.
Moreover, the constant $1/4$ can be replaced by any arbitrary constant $< 1$. From now on, we always assume number of actions $k$ is 2 or 3.

\begin{center}
    \begin{minipage}{0.9\textwidth}
    \hrulefill\\[5pt]
    \textbf{Complexity problem}\quad \klinear\\[3pt]
    \begin{tabular}{@{}p{0.1\linewidth} @{}p{0.89\linewidth}}
         \textit{Oracle:} & a deterministic MDP $M$ with $k$ actions, optimal value functions $V^*$ and $Q^*$ linear in $d$ dimensional features $\psi$ and horizon $H = O(d)$.\\
         \textit{Goal:}&  find policy $\pi$ such that  $V^\pi > V^* - 1/4$.
    \end{tabular} \\[5pt]
    \vphantom{.}\hrulefill
    \end{minipage}
    \end{center}


We now describe how the algorithm interacts with the MDP. We assume that the algorithm has access to the associated (i) reward function $R$, (ii) transition function $P$ and (iii) features $\psi$. For all these functions, the algorithm provides a state $s$ and action $a$ (if needed) and receives a random sample from the distribution $R(s,a)$ (for the reward function), the state $P(s,a)$ (for the transition function) or feature $\psi(s)$ or $\psi(s,a)$ (for the features). We assume that each call accrues constant runtime and input/output for these functions are of size polynomial in feature dimension $d$.


We will often talk about randomized algorithm $A$ solving a problem in time $t$ with error probability $p$. By this we mean (i) $A$ runs in time $O(t)$; (ii) for satisfiability problems, it returns YES on positive input instances with probability at least $1-p$ and returns NO on negative input instances with probability $1$; and (iii) for RL problem, it returns a good policy with probability at least $1-p$. 

 \subsubsection{No polynomial time algorithm for \linear{2}}
 With these considerations in mind, we present our main result that asserts that unless \npc=\rpc, no randomized polynomial time algorithm can find a good policy in deterministic MDPs with a constant number of actions and linear optimal value functions. 
\begin{theorem}[\linear{2}~$\in$ \rpc $\implies$ \npc=\rpc]
    \label{thm:main}
    Unless \npc=\rpc, no randomized algorithm can solve \linear{2}~with feature dimension $d$ in time polynomial in $d$ with error probability $1/10$.
\end{theorem}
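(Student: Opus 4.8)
The plan is to prove the contrapositive by a polynomial-time reduction from \usat~to \linear{2}. Given a \cnf~formula $\varphi$ on $v$ variables, promised to have either $0$ or $1$ satisfying assignments, I would construct in time $\poly(v)$ a deterministic MDP $M_\varphi$ with $2$ actions, horizon $H = O(d)$, and feature dimension $d = \poly(v)$, whose optimal value functions $V^*$ and $Q^*$ are linear in the features, and in which the satisfiability of $\varphi$ is recorded in $V^*$ by a constant gap. Granting such a reduction, any randomized $\poly(d)$-time solver for \linear{2} with error $1/10$ would yield a randomized polynomial-time decision procedure for \usat; combined with the standard fact that \sat~reduces to unique-SAT under randomized reductions (Valiant--Vazirani), this would place \sat~in \rpc, i.e.\ force \npc${}={}$\rpc.

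For the construction I would use a binary decision process over assignments: a state at depth $i$ encodes the partial assignment $(x_1,\dots,x_i)$, the two actions set $x_{i+1}\in\{0,1\}$, and depth-$v$ states are full assignments, from which the agent transitions to $\bot$. All three oracles --- transitions, rewards, and features --- are to be computed directly from $\varphi$ and the current partial assignment in $\poly(v)$ time, so the exponentially large state space is never materialized and the reduction stays efficient. The feature map $\psi$ would be derived from the clause structure of $\varphi$ (for instance, the monomials in the few bits appearing in each clause, together with bookkeeping coordinates), which is computable without knowing any satisfying assignment; the weight vector $\theta$ realizing $V^*=\langle\theta,\psi\rangle$ and $Q^*=\langle\theta,\psi\rangle$ is, by contrast, permitted to depend on the unknown satisfying assignment $x^*$. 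Rewards would be placed so that reaching $\bot$ is worthwhile only along an assignment that satisfies every clause, and normalized so that $V^*\in[0,1]$.

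To decide \usat, I would run the assumed \linear{2} solver on $M_\varphi$, deterministically roll out the returned policy $\pi$ from $s_0$ to read off a candidate assignment $\hat x$, and accept iff $\hat x$ satisfies $\varphi$. On a satisfiable instance, $V^*=1$ with unique optimizer $x^*$, and the constant value gap forces any $1/4$-good policy (one with $V^\pi>V^*-1/4$) to trace out $x^*$, so $\hat x=x^*$ passes verification and we accept with probability $\ge 9/10$. On an unsatisfiable instance no assignment satisfies $\varphi$, so verification fails regardless of $\pi$ and we reject with probability $1$. This is exactly the one-sided error profile required for membership in \rpc.

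The crux, and the step I expect to be hardest, is engineering $\psi$ and the rewards so that $V^*$ and $Q^*$ are \emph{simultaneously} exactly linear in $\poly(v)$-dimensional features while the satisfiable and unsatisfiable cases remain separated by a \emph{constant}. Naively, $V^*$ at a partial assignment is the indicator of lying on the path to $x^*$, a conjunction of bits that is not linear, and rewarding the number of satisfied clauses only produces an $O(1/v)$ gap. The heart of the argument is therefore the gadget that resolves both issues at once --- exploiting the $0/1$ promise to keep the combinatorial structure of the optimal value simple enough to linearize, while amplifying the gap to a constant --- and I would expect the bulk of the technical effort to lie there.
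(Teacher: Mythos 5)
Your outer reduction is exactly the paper's: reduce \usat~to \linear{2}, run the assumed solver, roll out the returned policy to extract a candidate assignment, verify it (giving one-sided error), and close the loop with Valiant--Vazirani. But the entire technical content of the theorem lives in the step you defer --- the gadget making $V^*$ and $Q^*$ \emph{exactly} linear in $\poly(v)$ dimensions with a constant gap --- and the skeleton you propose for it cannot be completed. In a binary decision tree over partial assignments $(x_1,\dots,x_i)$ with rewards only at satisfying leaves, the optimal value at depth $i$ is proportional to $\prod_{j\le i}\ind\{x_j=x^*_j\}$, a degree-$i$ polynomial in the hidden solution $x^*$ with $i$ ranging up to $v$; any feature map independent of $x^*$ that linearizes this exactly must span all $2^v$ monomials. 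No choice of clause-derived features of dimension $\poly(v)$ fixes this, because the obstruction is information-theoretic about which function of $x^*$ the value is, not about how cleverly you pick coordinates. So the gap you flag as ``the hardest step'' is a genuine missing idea, not a routine verification.

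The paper's construction resolves it by changing what a state is: every state carries a \emph{full} assignment $w\in\{-1,1\}^v$, and the available actions flip one of the three variables of the first clause unsatisfied by $w$ (so some action always decreases $\dist(w,w^*)$, since $w^*$ satisfies that clause). Rewards are Bernoulli with mean $g(l,w)=\bigl(1-(l+\dist(w,w^*))/(H+v)\bigr)^{r}$, paid only at $w^*$ or at the horizon $H=v^r$. This makes $V^*(s)=g(l,w)$ exactly, which depends on $w^*$ only through $\dist(w,w^*)=(v-\langle w,w^*\rangle)/2$ and hence is a degree-$r$ polynomial in $w^*$ --- linear in the $O(v^r)$ monomials of $w^*$ of degree at most $r$, with $\psi(s)$ the (efficiently computable, $w^*$-free) coefficient vector and $\theta$ the monomial vector. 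The constant gap comes from $V^*\ge 1/2$ at the root versus horizon rewards of expectation $v^{-\Omega(r^2)}$, which a $\poly(d)$-time algorithm never observes with high probability; this also handles the point your plan glosses over, namely that the true reward oracle depends on $\dist(w,w^*)$ and cannot be computed without $w^*$ --- the reduction runs the solver against a simulator with those rewards zeroed and argues the output distribution is unchanged. The three-way branch is then simulated with two actions via intermediate states, squaring the feature dimension. Without this (or an equivalent) mechanism for keeping $V^*$ a low-degree function of the hidden solution, your proof does not go through.
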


This resolves the open problem from \cite{weisz2021tensorplan} and \cite{bilinear2021} by showing that unless \rpc=\npc, no polynomial time randomized algorithm exists for deterministic transition MDPs with a constant number of actions and linear optimal value functions. 

Our main technical contribution is a reduction from \usat~to \linear{3}~such that a polynomial time algorithm for \linear{3}~implies a polynomial time algorithm for \usat. To achieve this, we use the input for \usat: a \cnf~formula $\varphi$ with $v$ variables, to design an input for \linear{3}: an MDP $M_{\varphi}$ with $3$ actions and optimal value functions $V^*$ and $Q^*$ linear in $d$-dimensional features. On a high level, the MDP is constructed such that each state represents an assignment to the \usat~variables and the goal is to ``search'' for the solution to the \usat~instance. In particular, at each state, the $3$ actions available to the agent correspond to an unsatisfied clause which ensures at least one action available to the agent decreases the distance to the solution. To incentivize finding the solution, a large reward is awarded on reaching the solution and a very small expected reward on reaching the horizon (this reward is small enough that any polynomial time RL algorithm only receives $0$ reward with high probability on reaching the horizon). This ensures that (i) finding a good policy also finds the satisfying assignment of $\varphi$ and (ii) the optimal value functions $V^*$ and $Q^*$ are linear in some low dimensional features. We present this construction in \Cref{sec:lower-3}. 

To get lower bounds for \linear{2}, we use the same construction as above with a small modification. We replace the choice of 3 actions $a_1$, $a_2$ and $a_3$ at every state with a depth-2 binary tree, where the first action is $a_1$ and the second action leads to a new state which has actions $a_2$ and $a_3$. This allows us to simulate the hard $3$-action MDP using a $2$-action MDP while increasing our feature dimension $d$ by at most a quadratic factor. We present this construction in \Cref{sec:lower2}. 

These reductions allow us to simulate a polynomial time algorithm for \usat~on input $\varphi$ by running the polynomial time algorithm for \linear{2}~on MDP $M_{\varphi}$. More formally, our reduction gives a polynomial relationship between the complexity of \usat~and \linear{2}: a polynomial $d^{q}$ time algorithm for \linear{2}~implies a polynomial $v^{O(q^2)}$ time algorithm for \usat. 

\begin{proposition}
    \label{prop:poly}
    Suppose $q\geq 1$. If \linear{2}~with feature dimension $d$ can be solved in time $d^{q}$ with error probability $1/10$, then \usat~with $v$ variables can be solved in time $v^{O(q^2)}$ with error probability $1/8$.
\end{proposition}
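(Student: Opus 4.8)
The plan is to chain the two reductions of \Cref{sec:lower-3} and \Cref{sec:lower2} and to track both the feature dimension and the running time through the composition. Given a \usat~instance $\varphi$ with $v$ variables, I would first invoke the construction of \Cref{sec:lower-3} to produce a deterministic MDP $M_\varphi$ that is a valid \linear{3}~instance of feature dimension $d_3$, and then apply the depth-$2$ binary-tree gadget of \Cref{sec:lower2}, which replaces each $3$-action state by two $2$-action states and so blows up the dimension by at most a quadratic factor. The result is a valid \linear{2}~instance $M'_\varphi$ with deterministic transitions, two actions, horizon $O(d)$, and optimal value functions $V^*,Q^*$ linear in $d = O(d_3^2)$ features. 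The single quantitative input that drives the final exponent is the dimension of $M_\varphi$: to keep the instance hard against a solver whose running time is $d^q$, one must instantiate the \Cref{sec:lower-3} construction with a size parameter that scales with $q$ (so that the sub-threshold horizon reward stays below the detection level of a $d^q$-query algorithm and a good policy is still forced to locate the satisfying assignment). I would argue that taking $d_3 = v^{O(q)}$ suffices, hence $d = v^{O(q)}$.

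With $M'_\varphi$ in hand, I would run the hypothesized \linear{2}~algorithm $\alg$, which by assumption halts in time $d^q$ and, on this valid instance, returns a policy $\pi$ with $V^\pi > V^* - 1/4$ with probability at least $9/10$. The \usat~decision procedure is then: roll out $\pi$ from the start state of $M'_\varphi$ (a deterministic computation, since both transitions and $\pi$ are deterministic); if the trajectory reaches the designated solution state, read off the encoded assignment $x^*$, and \emph{verify} in polynomial time whether $x^*$ satisfies $\varphi$, outputting YES iff it does, and NO otherwise. The explicit verification step is what gives the required one-sided error: we declare YES only when we hold an actual satisfying assignment, so there are never false positives, and when $\varphi$ is unsatisfiable the procedure outputs NO with probability $1$. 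Correctness in the satisfiable case rests on the design invariant of \Cref{sec:lower-3}: the unique satisfying assignment induces a solution state whose reward pushes $V^*$ well above the tiny horizon value, so any good policy must reach that state, whence $x^*$ is recovered and verified with probability at least $9/10$.

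For the running time, the two constructions, the rollout, and the verification are all $\poly(d) = v^{O(q)}$, so they contribute $v^{O(q)}$; the dominant cost is the single call to $\alg$, which takes time $d^q = \bigParens{v^{O(q)}}^{q} = v^{O(q^2)}$, giving the claimed bound, while the total error is at most the algorithm's $1/10 \le 1/8$. I expect the main obstacle to be the quantitative hardness claim underlying $d = v^{O(q)}$: proving that the \Cref{sec:lower-3} instance of this dimension really does force every $d^q$-time algorithm either to fail or to output a policy from which the satisfying assignment can be extracted. This is precisely where the two factors of $q$ multiply, one entering the feature dimension (to defeat a runtime-$d^q$ solver) and one entering the exponent of the solver's own running time, which is the source of the $O(q^2)$.
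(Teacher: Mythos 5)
Your overall architecture is the paper's: build the $3$-action MDP of \Cref{sec:lower-3}, convert it to a $2$-action MDP via the depth-$2$ gadget of \Cref{sec:lower2} (dimension $d = 2v^{2r} = v^{O(q)}$ after setting the degree parameter $r = \Theta(q)$), run the hypothesized solver, roll out the returned policy, and verify the extracted assignment to get one-sided error; the $O(q^2)$ exponent arises exactly as you say. But there is a genuine gap in how you run the solver: you treat the oracle for $M'_\varphi$ as something the reduction can simply provide, and correspondingly you account for no error beyond the solver's own $1/10$. The reduction \emph{cannot} simulate $M_\varphi$ exactly: the reward at a horizon state with assignment $w$ is $Ber\bigl(((v - \dist(w,w^*))/(H+v))^r\bigr)$, which depends on $\dist(w,w^*)$ and hence on the unknown satisfying assignment $w^*$. (The transitions and features are computable in $\poly(d)$ time, and the reward at a state carrying the satisfying assignment is handled by evaluating $\varphi$ on $w$, but the horizon rewards at non-satisfying states are not computable.)

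The paper therefore runs $\alg_{RL}$ on a modified simulator $\bar M_\varphi$ that returns reward $0$ on the entire last layer, and must prove (\Cref{prop:remove-bar}) that the solver's guarantee transfers: each horizon reward is Bernoulli with mean at most $v^{-r^2/2}$ and the solver draws at most $v^{r^2/4}$ of them, so with probability at least $4/5$ the true MDP would have returned all zeros anyway, and conditioned on that event the two executions are identically distributed; this is exactly where $1/10$ degrades to $7/8$ success, i.e.\ error $1/8$. Your phrase about the horizon reward staying ``below the detection level of a $d^q$-query algorithm'' is the right quantitative condition, but you deploy it as a hardness requirement on the instance rather than as the indistinguishability argument that makes the oracle simulatable --- and your error accounting (``the total error is at most the algorithm's $1/10 \le 1/8$'') shows the step is missing. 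Note also that this simulation argument is precisely what forces $d^{q} \le v^{r^2/4}$, hence $r \ge 12q$ given $d \le v^{3r}$; without it there would be no reason to couple $r$ to $q$ at all, which is why the coupling felt to you like the ``main obstacle'' rather than a consequence of a concrete lemma.
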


This relates the complexity of \usat~to \linear{2}~and \linear{3}. To relate these problems to complexity class \npc, we use a seminal result from \cite{vazval85} which showed that uniqueness of solution can not be used to solve search problems quickly. In particular, they showed a randomized polynomial time reduction from \sat~to \usat.
\begin{restatable}[Valiant-Vazirani Theorem]{theorem}{vvthm}
        \label{thm:vv}
        Unless \npc=\rpc, no polynomial time randomized algorithm can solve \usat~with error probability $1/8$.
\end{restatable}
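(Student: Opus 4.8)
The plan is to prove the contrapositive: assuming a randomized algorithm $A$ solves \usat\ in polynomial time with error probability $1/8$, I will build a randomized polynomial-time algorithm for \sat\ with one-sided error, placing \npc\ in \rpc. Since \sat\ is \npc-complete and $\rpc \subseteq \npc$ always holds, this yields $\npc = \rpc$. The engine of the reduction is random hashing that isolates a single satisfying assignment (this is exactly the Valiant--Vazirani argument). Given a \cnf\ formula $\varphi$ on $v$ variables with satisfying-assignment set $S \subseteq \{0,1\}^v$, I work with affine hashes $h(x) = Mx + b$ over $\mathrm{GF}(2)$, where $M$ is a uniform $i \times v$ matrix and $b$ a uniform vector. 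Such a family is pairwise independent, so $\Pr[h(x) = 0] = 2^{-i}$ for each $x$ and $\Pr[h(x) = 0 \wedge h(y) = 0] = 2^{-2i}$ for $x \neq y$.

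The first step is the isolation lemma. Suppose $2^{k} \le \card{S} < 2^{k+1}$ and hash to dimension $i = k+2$; write $p = 2^{-(k+2)}$ and let $N$ count the $x \in S$ with $h(x) = 0$. Then $\E[N] = \card{S}\,p \in [1/4, 1/2)$, and a Bonferroni/inclusion--exclusion bound using only pairwise independence gives
$$\Pr[N = 1] \ge \card{S}\,p - \card{S}(\card{S}-1)\,p^2 \ge \card{S}\,p\,\bigl(1 - \card{S}\,p\bigr) \ge \tfrac{3}{16}.$$
Thus, for the correct (but unknown) value of $k$, augmenting $\varphi$ with the $i = k+2$ parity constraints expressed by $h(x) = 0$ leaves \emph{exactly one} satisfying assignment with constant probability.

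The second step is to realize the augmented formulas as legitimate \usat\ inputs. Each row of $h(x) = 0$ is a parity constraint $\bigoplus_{j} M_{\ell j} x_j = b_\ell$ over $\mathrm{GF}(2)$, which I encode as a \cnf\ using a chain of auxiliary variables computing prefix XORs, every auxiliary variable being forced to a unique value by the original $x$'s. Hence the number of satisfying assignments of the resulting formula $\varphi_i$ equals the number of $x \in S$ with $h(x) = 0$; in particular $\card{S} = 0$ makes every $\varphi_i$ unsatisfiable, while the isolation lemma produces a uniquely satisfiable $\varphi_i$ when $\varphi$ is satisfiable. The reduction forms $\varphi_0, \varphi_1, \ldots, \varphi_{v+2}$ (one per guessed hash dimension), runs $A$ on each, and reports ``satisfiable'' iff some call returns \textsc{yes}.

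Finally I analyze correctness against the one-sided error model of the excerpt. If $\varphi$ is unsatisfiable, every $\varphi_i$ is a negative \usat\ instance, so $A$ answers \textsc{no} with probability $1$ on each and the reduction never reports a false positive (note that the OR makes promise-violating calls with $\ge 2$ solutions harmless, since returning \textsc{yes} is then the correct verdict anyway). If $\varphi$ is satisfiable with $2^k \le \card{S} < 2^{k+1}$, then on the call $i = k+2$ the formula $\varphi_i$ is a positive unique instance with probability $\ge 3/16$, and conditioned on that event $A$ returns \textsc{yes} with probability $\ge 7/8$, so the reduction succeeds with probability at least $\tfrac{3}{16}\cdot\tfrac{7}{8}$, a positive constant. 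Repeating the whole procedure a constant number of times pushes the success probability above $1/2$ while preserving zero false positives, which is precisely membership of \sat\ in \rpc. The main obstacle is the isolation lemma itself: one must tune the hash dimension so the expected number of surviving solutions is $\Theta(1)$ and then control the second moment tightly enough, via pairwise independence, to guarantee exactly one survivor with constant probability rather than merely at least one. The \cnf\ encoding of the parities and the error-probability bookkeeping are routine by comparison.
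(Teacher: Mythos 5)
Your proposal is correct: it is precisely the classical Valiant--Vazirani isolation argument (pairwise-independent affine hashing over $\mathrm{GF}(2)$, a second-moment bound giving a constant probability of exactly one surviving assignment at the right hash dimension, a solution-count-preserving CNF encoding of the parities, and one-sided-error amplification), and the paper does not reprove this theorem but simply cites it from Valiant and Vazirani, so your argument matches the intended source exactly. The $3/16$ isolation bound and the handling of promise-violating calls under the paper's one-sided error model both check out.
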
 Combining our reduction with Valiant-Vazirani Theorem proves our main result--\Cref{thm:main}. 



\subsubsection{Quasi-Polynomial Lower Bound for \linear{2}}
We now present computational lower bound under a strengthening of \npc~$\neq$ \rpc~conjecture, Randomized Exponential Time Hypothesis (\reth) \citep{reth2014hardness}, which asserts that probabilistic algorithms can not decide if a given \sat~problem with $v$ variables and $O(v)$ clauses is satisfiable in sub-exponential time.

\begin{restatable}[Randomized Exponential Time Hypothesis (\reth)]{definition}{defreth}
    \label{def:reth}
	There is a constant $c > 0$ such that no \emph{randomized} algorithm can decide \sat~with $v$ variables in time $2^{c v}$ with error probability $1/2$.
\end{restatable}

Randomized Exponential Time Hypothesis along with many variants motivated by Exponential Time Hypothesis \citep{eth2001} have been influential in discovering hardness results for a variety of problems see, e.g. \cite{cygan2015lower,vassilevska2019hardness}. Under Randomized Exponential Time Hypothesis, our main result is a quasi-polynomial computational lower bound for learning good policies in deterministic MDPs with linear optimal value functions.
\begin{restatable}[Quasi-polynomial lower bound for \linear{2}]{theorem}{coraction}
    \label{cor:3hardness}
    Under \reth, no randomized algorithm can solve \linear{2}~with feature dimension $d$ in time $d^{O(\log d/ \log \log d)}$ with error probability $1/10$.
\end{restatable}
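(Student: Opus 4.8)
The plan is to prove \Cref{cor:3hardness} by composing the parameterized reduction behind \Cref{prop:poly} with the Valiant--Vazirani reduction and \reth. Suppose, toward a contradiction, that for some constant $c_0 > 0$ there is a randomized algorithm $A$ solving \linear{2} with feature dimension $d$ in time $d^{c_0 \log d/\log\log d}$ with error probability $1/10$. I would start from a \sat instance $\varphi$ with $v$ variables and first apply the randomized polynomial-time Valiant--Vazirani reduction underlying \Cref{thm:vv}, which produces a \usat instance $\varphi'$ with $v' = O(v)$ variables such that deciding $\varphi'$ decides $\varphi$ with probability $\Omega(1/v)$; repeating this $O(v)$ times and taking a majority boosts the success probability to a constant at the cost of only a $\poly(v)$ factor in time.

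Next I would feed $\varphi'$ into the reduction from the proof of \Cref{prop:poly}, instantiated with parameter $q = \delta\sqrt{v'/\log v'}$ for a small constant $\delta > 0$ to be fixed later. That reduction turns a \usat instance on $v'$ variables into a \linear{2} instance of feature dimension $d = (v')^{\Theta(q)}$, with the guarantee that any good policy recovers a satisfying assignment of $\varphi'$; running $A$ on this instance therefore decides $\varphi'$. To invoke \Cref{prop:poly} I must check that $A$ respects the $d^q$ time budget, i.e.\ that $c_0\log d/\log\log d \le q$. Since $\log d = \Theta(q\log v')$ and the chosen $q$ is a small power of $v'$, one has $\log\log d = \Theta(\log v')$ and hence $\log d/\log\log d = \Theta(q)$; the budget constraint thus holds as soon as $c_0$ lies below an absolute constant fixed by the reduction. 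This is exactly why the hard regime is $q = \Theta(\log d/\log\log d)$.

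With the budget satisfied, \Cref{prop:poly} produces an algorithm deciding $\varphi'$ --- and hence, through the Valiant--Vazirani step, $\varphi$ --- in time $(v')^{O(q^2)} = 2^{O(\delta^2 v')}$. Choosing $\delta$ small enough and using $v' = O(v)$, the overall running time drops below $2^{cv}$ for the constant $c$ guaranteed by \reth, while the amplification above keeps the total error probability below $1/2$. This contradicts \reth and establishes \Cref{cor:3hardness}.

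I expect the crux to be the simultaneous calibration of $q$: it must be large enough that $A$'s mildly super-polynomial exponent $c_0\log d/\log\log d$ still fits inside the $d^q$ budget required by \Cref{prop:poly}, yet small enough that the resulting $(v')^{O(q^2)}$ running time remains sub-exponential in $v'$. It is precisely this tension that pins the threshold at $q = \Theta(\log d/\log\log d)$ and forces $c_0$ to be a sufficiently small constant. A secondary but necessary point is bookkeeping the three randomized stages --- Valiant--Vazirani, the \linear{2} reduction, and $A$ --- so that the composed \sat solver has error below the $1/2$ tolerated by \reth, together with confirming that Valiant--Vazirani inflates the variable count by at most a constant (or polynomial) factor, which is what keeps the final exponent linear in $v$.
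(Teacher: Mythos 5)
Your overall architecture (instantiate the reduction with a super-constant $q$ so that $d^{q}$ absorbs the $d^{O(\log d/\log\log d)}$ budget, then land at a $2^{O(\delta^2 v)}$-time \usat{} solver and contradict an exponential lower bound) matches the paper's, and your calibration $q=\Theta(\log d/\log\log d)$, $\log d=\Theta(q\log v)$ is the same computation the paper does in the proof of \Cref{prop:exp3} with $r=\lceil\sqrt{v}/\log v\rceil$. The fatal problem is the first step: you bridge from \sat{} to \usat{} via Valiant--Vazirani, and that bridge does not carry exponential-time hardness. The VV reduction isolates a solution by conjoining $\varphi$ with random XOR (hash) constraints; each such constraint involves $\Theta(v)$ variables, and re-encoding up to $v$ of them as a \cnf{} requires $\Theta(v^2)$ auxiliary variables. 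So the \usat{} instance $\varphi'$ has $v'=\Theta(v^2)$ variables, not $O(v)$ as you assert. Consequently a $2^{O(\delta^2 v')}$-time \usat{} solver only yields a $2^{O(\delta^2 v^2)}$-time \sat{} solver, which is no contradiction with the $2^{cv}$ bound of \reth{} no matter how small $\delta$ is. Equivalently: VV only shows that polynomial-time solvability of \usat{} collapses \npc{} to \rpc; it is entirely consistent with VV that \usat{} is solvable in time $2^{\sqrt{v}}$ while \sat{} requires $2^{v}$, which is exactly the regime your argument lives in. The paper flags this precise obstruction and routes around it by invoking the Calabro--Impagliazzo--Kabanets--Paturi result (\Cref{thm:cc}), which shows directly that under \reth{} there is a constant $c>0$ such that \usat{} itself cannot be solved in time $2^{cv}$; combining that with \Cref{prop:exp} finishes the proof. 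Your argument needs \Cref{thm:cc} (or some isolation procedure with only linear variable blowup) in place of the VV step.

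A minor secondary point: you re-instantiate \Cref{prop:poly} with a non-constant $q=\delta\sqrt{v'/\log v'}$. That is morally fine --- the paper's own proof of \Cref{prop:exp3} does the analogous thing by choosing $r$ as a function of $v$ --- but \Cref{prop:poly} is stated for the polynomial regime, so strictly you should either reprove the parameterized reduction for this choice of $r$ (as the paper does in \Cref{prop:exp} / \Cref{lemma:3hardnessapp}) or verify that the constraints $2\le r<v$ and the runtime bound $v^{r^2/4}$ in the underlying argument still hold; they do, so this is a presentational gap rather than a mathematical one. The Valiant--Vazirani step, by contrast, is a genuine error.
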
 

This improves over our super-polynomial lower bound albeit depending on a much stronger hardness assumption. To prove this result, we use a different choice of parameters in our reduction and set the feature dimension $d$ to be sub-exponential in the number of variables $v$ to get the following:
\begin{proposition}
    \label{prop:exp}
    If \linear{2}~with feature dimension $d$ can be solved in time $d^{O(\log d/\log \log d)}$ with error probability $1/10$, then \usat~with $v$ variables can be solved in time $2^{O(v/\log v)}$ with error probability $1/8$.
\end{proposition}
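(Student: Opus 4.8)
The plan is to reuse, essentially verbatim, the reduction from \usat~to \linear{2}~that underlies \Cref{prop:poly}, changing only the internal parameter that controls the feature dimension. Recall that this reduction turns a \usat~instance $\varphi$ on $v$ variables into an MDP $M_{\varphi}$ (built in \Cref{sec:lower-3} and \Cref{sec:lower2}) whose feature dimension $d$ we are free to scale up, and that a \linear{2}~solver running in time $T$ on $M_{\varphi}$ yields a \usat~solver running in time $T + \mathrm{poly}(d)$: we build $M_{\varphi}$, run the solver, and read the unique satisfying assignment off the trajectory of the returned good policy, since any policy of value $> V^* - 1/4$ must reach the solution state rather than settle for the negligible horizon reward. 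The only real constraint is that $M_{\varphi}$ be a \emph{valid} \linear{2}~instance against a solver with this time budget; as in \Cref{prop:poly}, this forces the feature dimension to grow with the solver's running-time exponent, so that a solver of time $d^{q}$ requires $d = v^{\Theta(q)}$ and hence produces a \usat~algorithm of time $d^{q} = v^{\Theta(q^{2})}$.

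For the present statement the assumed solver runs in time $d^{e}$ with exponent $e = O(\log d/\log\log d)$, so this trade-off no longer lands in the polynomial regime; instead I would push $d$ to be sub-exponential in $v$. Concretely, set the parameter so that $\log d = \Theta(\sqrt{v})$, i.e.\ $d = 2^{\Theta(\sqrt{v})}$. Then $\log\log d = \Theta(\log v)$, the solver's exponent is $e = \Theta(\sqrt{v}/\log v)$, and the validity requirement $d = v^{\Theta(e)}$ reads $v^{\Theta(\sqrt{v}/\log v)} = 2^{\Theta(\sqrt{v})}$, matching our choice of $d$. The resulting \usat~running time is dominated by the solver call:
\[
\log_2\bigParens{d^{e}} = e\log_2 d = \Theta\!\BigParens{\frac{(\log_2 d)^2}{\log\log d}} = \Theta\!\BigParens{\frac{v}{\log v}},
\]
so $d^{e} = 2^{O(v/\log v)}$, while the construction-and-extraction overhead $\mathrm{poly}(d) = 2^{O(\sqrt{v})} = 2^{o(v/\log v)}$ is absorbed into this bound. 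The error-probability bookkeeping is identical to \Cref{prop:poly}: the solver's $1/10$ failure probability together with the randomness internal to the reduction yields a \usat~solver with error at most $1/8$, and this is unaffected by the parameter choice.

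The step I expect to be the real work is checking that the construction of \Cref{sec:lower-3} and \Cref{sec:lower2} remains a valid \linear{2}~instance once $d$ is taken sub-exponential rather than polynomial in $v$. Three points must survive the new regime: (i) $V^*$ and $Q^*$ stay \emph{exactly} linear in the $d$-dimensional features $\psi$ with a single parameter vector; (ii) the horizon reward stays small enough relative to a $2^{O(v/\log v)}$-time solver that any policy of value $> V^* - 1/4$ is still forced through the solution state, so that extraction recovers the satisfying assignment; and (iii) the horizon bound $H = O(d)$ and the $\mathrm{poly}(d)$ input/output sizes continue to hold. Since these properties were already established in \Cref{prop:poly} for a parameter that scales the feature dimension with the solver's budget, the task is chiefly to re-run those arguments at $d = 2^{\Theta(\sqrt{v})}$ and confirm that no factor polynomial in $v$ has quietly become exponential; the running-time identity above then closes the proof.
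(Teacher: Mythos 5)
Your proposal is correct and follows essentially the same route as the paper: the paper proves this by invoking the general reduction with $r=\lceil\sqrt{v}/\log v\rceil$, which is exactly your choice $\log d=\Theta(r\log v)=\Theta(\sqrt{v})$, and then the identity $e\log d=\Theta((\log d)^2/\log\log d)=\Theta(v/\log v)$ closes the argument just as you compute. The only detail you gloss over is that the paper pins down an explicit constant in the exponent ($d^{\log d/(72\log\log d)}$ for two actions, versus $32$ for three) so that the solver's runtime provably fits under the reduction's budget $v^{r^2/4}$ from \Cref{prop:remove-bar}; at the $O(\cdot)$ level of the stated proposition this is absorbed by scaling $r$, as you implicitly do.
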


Here its important to note that we can not use Valiant-Vazirani Theorem to relate \usat~and \sat, since it is consistent with Valiant-Vazirani Theorem that \usat~is solvable in $2^{\sqrt{v}}$ time but \sat~takes $2^{v}$ time. Therefore, we use a more refined lower bound for \usat~from \cite{calabro2008unique} which showed that if \usat~with $v$ variables can be solved in time $2^{\alpha v}$ for every $\alpha > 0$, then so can \ksat~for all $k\geq 3$. 

\begin{theorem}[\cite{calabro2008unique}]
    \label{thm:cc}
    Assuming \reth~is true, there exists a constant $c> 0$ such that no randomized algorithm can solve \usat~with $v$ variables in time $2^{c v}$ with error probability $1/2$.
\end{theorem}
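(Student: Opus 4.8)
The plan is to derive \Cref{thm:cc} from the sub-exponential equivalence between \usat~and \sat~established in \cite{calabro2008unique} (stated informally just above) together with the hypothesis \reth. I would argue by contraposition: assume the conclusion of \Cref{thm:cc} fails, and deduce that \reth~is false.

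First I would unpack the failure of the conclusion. The statement asserts a single constant $c>0$ for which \usat~admits no randomized $2^{cv}$-time algorithm with error $1/2$; its negation is that for \emph{every} constant $\alpha>0$ there is a randomized algorithm solving \usat~in time $2^{\alpha v}$, i.e. \usat~is solvable in sub-exponential time for every base. Before feeding this into the reduction I would first amplify: since \usat~is a promise problem with at most one witness, running any fixed-base algorithm a constant number of times and taking the (verified) majority answer drives the error to an arbitrarily small constant while multiplying the running time only by a constant, hence preserving the ``for every $\alpha$'' guarantee.

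Next I would invoke the isolation result of \cite{calabro2008unique}: if \usat~is solvable in time $2^{\alpha v}$ for every $\alpha>0$, then so is \ksat~for every $k\ge 3$, and in particular \sat~($k=3$). Concretely, their randomized width-preserving isolation turns a $3$-CNF on $v$ variables into $2^{o(v)}$ \usat~instances, each on $v+o(v)$ variables (the procedure adds only bounded-width clauses and at most $o(v)$ auxiliary variables), so that a satisfiable formula produces, with good probability, at least one uniquely satisfiable instance, while an unsatisfiable formula produces only unsatisfiable ones. Answering each instance with the assumed $2^{\alpha(v+o(v))}$-time \usat~algorithm and taking the disjunction decides \sat~in total time $2^{o(v)}\cdot 2^{\alpha(v+o(v))}=2^{(\alpha+o(1))v}$. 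Since $\alpha>0$ is arbitrary, this yields, for every constant $c>0$, a randomized $2^{cv}$-time algorithm for \sat~(a final constant-factor amplification pushes the error below $1/2$). That is exactly the negation of \reth, contradicting the hypothesis and proving the theorem.

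The genuine difficulty is quarantined inside the cited isolation lemma, which I would use as a black box, and I would flag why it is the crux: the naive Valiant--Vazirani isolation (compare \Cref{thm:vv}) adds random parity constraints spanning $\Theta(v)$ variables, and re-encoding each wide XOR as a $3$-CNF injects $\Theta(v)$ fresh variables, inflating the variable count by a constant \emph{multiplicative} factor and thereby destroying the exponent. The technical heart of \cite{calabro2008unique}---isolating a satisfying assignment using only bounded-width clauses and merely $o(v)$ extra variables, at the price of only a $2^{o(v)}$ success-probability penalty---is precisely what lets the ``for every $\alpha$'' quantifier survive. The only bookkeeping I would need to verify on my own side is routine: that the additive $o(v)$ blow-up in variables and the multiplicative $2^{o(v)}$ blow-up in the number of oracle calls are both absorbed into the $o(1)$ term of the exponent, so that an arbitrarily small base for \usat~still produces an arbitrarily small base for \sat.
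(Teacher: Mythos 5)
The paper does not prove this statement itself; it imports it verbatim from \cite{calabro2008unique}, justified only by the one-line remark that if \usat~is solvable in time $2^{\alpha v}$ for every $\alpha>0$ then so is \ksat~for all $k\ge 3$. Your derivation is a correct reconstruction of exactly that route (contraposition, one-sided-error amplification, the isolation-based equivalence as a black box, then \reth), so it takes essentially the same approach as the paper intends; the only caveat is that your parenthetical description of the internals of the isolation procedure is a loose paraphrase of Calabro et al.'s actual construction, but since you quarantine it as a black box this does not affect the argument.
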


In conjunction with our reduction, this gives a quasi-polynomial lower bound for \linear{2}~under \reth. We leave as an open problem if the techniques introduced in this work can be used to prove an exponential lower bound for \linear{2} under \reth. 


Our results give evidence that even though having linear optimal value functions is sufficient for sample efficient reinforcement learning \cite{bilinear2021}, it is not sufficient for computationally efficient reinforcement learning. More assumptions are required for computationally efficient algorithms, in addition to optimal value functions $Q^*$ and $V^*$ being linear in low-dimensional features, for example sub-optimality gap \citep{du2020agnostic}. We hope that this work will open up new research avenues for finding minimal sufficient conditions for computationally efficient reinforcement learning. We now discuss a few further notable implications of this work.

\begin{itemize}
    \item \emph{Computational-Statistical Gap}: There are many problems which exhibit computational-statistical gaps i.e.~regimes where the underlying statistical problem is information theoretically possible but no computationally efficient algorithm exists. Examples include community detection \citep{HOLLAND1983109,mcsherry2001,abbe2015detection}, planted clique \citep{noga98,barak2019} and sparse principal component analysis \citep{berthet13,berthet2013optimal}. To the best of our knowledge, our computational lower bound is the first computational-statistical gap in reinforcement learning with function approximation. When both optimal value functions $Q^*$ and $V^*$ are linear, MDPs with any number of actions are statistically easy to solve \citep{bilinear2021} but our results show that no polynomial time algorithm can solve these MDPs even with a constant number of actions, unless \npc=\rpc.
    
    \item \emph{Natural Problem in} \textsc{NP} $\setminus$ \textsc{P}: There has been quite a lot of recent work in complexity theory literature on proving quasi-polynomial lower bounds based on Exponential Time Hypothesis (for e.g. dense constraint satisfaction problems \citep{aaronson2014qp1}, approximating best nash equilibrium \citep{braverman2015qp2} and approximating densest $k$-subgraph with perfect completeness \citep{braverman2017qp3}). This work adds RL with deterministic transition, linear bounded optimal value functions $V^*$, $Q^*$ and constant number of actions as another natural problem in NP but not in P unless \npc=\rpc.
\end{itemize}

\paragraph{Remainder of this paper.} In \Cref{sec:lower-3} and \Cref{sec:lower2}, we present our lower bound constructions for $3$ action and $2$ action MDPs respectively.
\section{Lower Bound for MDPs with 3 actions}
\label{sec:lower-3}
In this section, we will prove the reduction, \Cref{prop:poly3} and \Cref{prop:exp3}, restated versions of \Cref{prop:poly} and \Cref{prop:exp} for \linear{3}. The overall idea is to first build a randomized algorithm $\alg_{SAT}$ which can decide \usat~using a randomized algorithm $\alg_{RL}$ which solves \linear{3}. 
The two reductions only differ in their settings of parameters. 

In the first setting, which we use to prove that no polynomial time algorithm exists for \linear{3}, we set the feature dimension $d$ to be polynomial in the number of variables $v$. Under this setting, we can build a polynomial time randomized algorithm for \usat~using a polynomial time randomized algorithm for \linear{3}.

\begin{proposition}
    \label{prop:poly3}
    Suppose $q\geq 1$. If \linear{3}~with feature dimension $d$ can be solved in time $d^{q}$ with error probability $1/10$, then \usat~with $v$ variables can be solved in time $O(v^{8q + 16q^2})$ with error probability $1/8$.
\end{proposition}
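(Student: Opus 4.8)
The plan is to prove \Cref{prop:poly3} by exhibiting the reduction advertised in the introduction: from a \cnf~formula $\varphi$ with $v$ variables and $O(v)$ clauses I will construct, in time polynomial in $v$, a deterministic MDP $M_\varphi$ that is a legal \linear{3}~instance (three actions, horizon $H = O(d)$, and optimal value functions $V^*, Q^*$ linear in a known $d$-dimensional feature map $\psi$), and then wrap the hypothesized solver $\alg_{RL}$ for \linear{3}~inside a solver $\alg_{SAT}$ for \usat. The guiding principle is that the (promised unique) satisfying assignment $x^*$ of $\varphi$ should be the only high-reward terminal state, so that producing a near-optimal policy is equivalent to locating $x^*$, which under the \usat~promise is equivalent to deciding satisfiability.

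For the construction I would identify the states of $M_\varphi$ with Boolean assignments $x \in \{0,1\}^v$ (plus a few bookkeeping states), with start state a fixed assignment such as all-zeros. At any state $x$ that fails $\varphi$, I fix a canonical unsatisfied clause $C = \ell_{i_1} \vee \ell_{i_2} \vee \ell_{i_3}$ and let the three actions flip the variables $x_{i_1}, x_{i_2}, x_{i_3}$; all transitions are deterministic. The key combinatorial invariant is that if $x^*$ exists then, since $x^*$ satisfies $C$ while $x$ does not, at least one of these three flips strictly decreases the Hamming distance $\dist(x, x^*)$; hence $x^*$ is reachable from every state within $v \le H$ steps. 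I place a large deterministic reward on reaching $x^*$ (normalized so that $V^* \in [0,1]$) and only a tiny \emph{stochastic} reward along trajectories that instead run out to the horizon. This makes $V^*(s_0) \approx 1$ when $\varphi$ is satisfiable and $V^*(s_0) \approx 0$ otherwise, so that in the satisfiable case every policy $\pi$ with $V^\pi > V^* - 1/4$ must follow a trajectory that reaches $x^*$.

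The solver $\alg_{SAT}$ then runs $\alg_{RL}$ on $M_\varphi$, traces the returned deterministic policy forward from the start state, and outputs YES exactly when the resulting trajectory visits an assignment satisfying $\varphi$ (checkable in $\poly(v)$ time). On unsatisfiable instances no satisfying assignment exists, so $\alg_{SAT}$ answers NO with probability $1$; on satisfiable instances $\alg_{RL}$ returns a good policy—and hence exposes $x^*$—with probability at least $9/10$, which already beats the target $7/8$, so the error bookkeeping from the $1/10$ guarantee of $\alg_{RL}$ to the $1/8$ guarantee of $\alg_{SAT}$ has room to spare. For the parameters I would take the feature dimension to be $d = v^{\Theta(q)}$: this $q$-dependence is forced by calibrating the tiny horizon reward against the adversary's budget, namely setting it to mean $\Theta(d^{-q})$ so that, union-bounding over the at most $d^q$ reward queries of a time-$d^q$ execution of $\alg_{RL}$, the event that any nonzero horizon reward is ever observed has small probability; linearizing the resulting value function then costs dimension $v^{\Theta(q)}$. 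Running $\alg_{RL}$ for $d^q$ steps with $\poly(d)$ per oracle call takes $d^q \cdot \poly(d) = v^{\Theta(q^2)}$ time, matching the claimed $O(v^{8q+16q^2})$ bound.

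The step I expect to be the main obstacle is verifying that $V^*$ and $Q^*$ are genuinely linear in a \emph{low-dimensional} $\psi$, since this is exactly what certifies $M_\varphi$ as a valid \linear{3}~instance and is the content that makes the hardness nontrivial. The naive value (``$1$ if $x^*$ is still reachable, else $0$'') is a high-degree, essentially arbitrary function of $x$, so I cannot choose $\psi$ and the rewards independently; instead I would co-design them, shaping the per-step rewards so that the telescoping sum defining $V^*(x)$ is, by construction, an affine function $\langle \theta, \psi(x)\rangle$ of a feature vector built from bounded-degree monomials of the clause structure together with the step count, with the hidden target $x^*$ absorbed into $\theta$ rather than into $\psi$ (so that the features themselves leak nothing about $x^*$). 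Confirming linearity of $Q^*$ then reduces to checking Bellman consistency at every state--action pair, which I would carry out by a case analysis according to whether the chosen action moves toward or away from $x^*$; the interplay between this linearity requirement and the $\Theta(d^{-q})$ reward scale is what ultimately pins the feature dimension at $v^{\Theta(q)}$.
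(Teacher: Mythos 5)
Your plan follows the paper's proof essentially step for step: the same clause-driven bit-flip MDP with the Hamming-distance-decreasing invariant, the same reward shaping that makes $V^*$ and $Q^*$ degree-$\Theta(q)$ polynomials in $w^*$ (hence linear in $d = v^{\Theta(q)}$ monomial features with $w^*$ absorbed into $\theta$ and the coefficients into $\psi$), the same zero-reward simulator for the horizon layer justified by a union bound over the solver's reward queries, and the same parameter choice $r = \Theta(q)$ yielding the $v^{O(q^2)}$ runtime. The only quantitative nit is that the horizon reward must have mean strictly below the reciprocal of the query budget $d^{q}$ --- the paper's choice gives mean $v^{-r^2+r} \ll d^{-q}$ --- rather than $\Theta(d^{-q})$, since at exactly $\Theta(d^{-q})$ the union bound over $d^{q}$ queries yields only a constant (not small) failure probability; with that adjustment your passage from the $9/10$ guarantee on the true MDP to a $7/8$ guarantee on the simulator is exactly the paper's conditioning argument.
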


In the second setting, which we use to prove a quasi-polynomial lower bound for \linear{3}, we set the feature dimension $d$ to be sub-exponential in the number of variables $v$. This allows us to transform an exponential time lower bound for \usat~into a quasi-polynomial lower bound for \linear{3}.

\begin{restatable}{proposition}{propexp}
    \label{prop:exp3}
    If \linear{3}~with feature dimension $d$ can be solved in time $d^{\log d/(32 \log \log d)}$ with error probability $1/10$, then \usat~with $v$ variables can be solved in time $2^{O(v/\log v)}$ with error probability $1/8$.
\end{restatable}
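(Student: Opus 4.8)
The plan is to reuse, essentially verbatim, the reduction that proves \Cref{prop:poly3}, changing only the single parameter that controls the feature dimension. Recall the shape of that reduction: from a \cnf~formula $\varphi$ on $v$ variables we build the deterministic MDP $M_\varphi$ whose states are assignments in $\{0,1\}^v$; at a non-solution state the three actions correspond to the three literals of some clause left unsatisfied there, a reward of $1$ is collected on reaching the (promised unique) satisfying assignment $a^\star$, and an exponentially small expected reward $\delta$ is planted at the horizon. Since \usat~promises a unique solution, every non-solution violates a clause, so at least one of the three actions strictly decreases the Hamming distance to $a^\star$; hence the optimal policy reaches $a^\star$ within $v$ steps and any policy with $V^\pi > V^* - 1/4$ must pass through $a^\star$. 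The optimal value is a degree-$k$ polynomial of the assignment, realised linearly over the $d = v^{\Theta(k)}$-dimensional feature map of monomials of degree at most $k$, which is exactly what makes $V^*$ and $Q^*$ linear; the degree $k$ (equivalently $d$) is the knob I will turn. For unsatisfiable $\varphi$ the same object is a valid \linear{3}~instance whose value is $\approx\delta$ everywhere, so the RL oracle's promise holds in both cases.

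For \Cref{prop:poly3} one takes $k=\Theta(q)$, so $d=\poly(v)$; here I instead scale the knob up so that $\log d=\Theta(\sqrt v)$, i.e. $d=2^{\Theta(\sqrt v)}$, which is sub-exponential in $v$ and corresponds to degree $k=\Theta(\sqrt v/\log v)$. In this regime $\log\log d=\Theta(\log v)$, and the hypothesised RL running time becomes
\[
  d^{\,\log d/(32\log\log d)} \;=\; 2^{\,(\log d)^2/(32\log\log d)} \;=\; 2^{\,\Theta(\sqrt v)\cdot\Theta(\sqrt v/\log v)} \;=\; 2^{\,O(v/\log v)} .
\]
I would then check that everything else stays within this budget: building $M_\varphi$ costs $\poly(d)=2^{O(\sqrt v)}=2^{o(v/\log v)}$; the horizon $H=O(d)$ comfortably exceeds the $\le v$ steps the optimal policy needs; and the planted reward can be set to $\delta=2^{-\Theta(v/\log v)}$, which has $O(v/\log v)$ bits of precision and is still below the reciprocal of the RL running time, so that the solver observes only zero reward at the horizon with high probability.

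With these parameters, $\alg_{SAT}$ is assembled exactly as in \Cref{prop:poly3}: on input $\varphi$ it constructs $M_\varphi$, runs $\alg_{RL}$ once, simulates the returned policy from the start state for $v$ steps, reads off the assignments visited, and outputs YES (with that assignment) iff one of them satisfies $\varphi$. The error is one-sided. If $\varphi$ is unsatisfiable, no visited assignment can satisfy $\varphi$, so $\alg_{SAT}$ outputs NO with probability $1$; if $\varphi$ is satisfiable, then with probability at least $9/10$ the solver returns a good policy, whose trajectory passes through $a^\star$, which we then verify deterministically, so $\alg_{SAT}$ outputs YES with probability at least $9/10\ge 7/8$. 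The running time is dominated by the single call to $\alg_{RL}$, namely $2^{O(v/\log v)}$, and the error probability is at most $1/10<1/8$, as required.

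The construction itself is the same as for \Cref{prop:poly3}, so the only genuinely new work is the parameter bookkeeping, and that is where I expect the one real subtlety. In \Cref{prop:poly3} the RL exponent $q$ is a constant, so the planted reward $\delta\approx d^{-q}$ is merely inverse-polynomial; here the exponent $\log d/(32\log\log d)$ is super-constant, so $\delta$ must be quasi-polynomially small in $d$, and the degree-$k$ polynomial representation of $V^*$ must still realise this tiny value together with the correct strict incentive gradient at every distance. The main thing to verify is therefore that the construction's validity requirement (the degree being large enough to fool a time-$d^{\log d/(32\log\log d)}$ algorithm) is simultaneously compatible with $\log d=\Theta(\sqrt v)$ and with $\delta=2^{-\Theta(v/\log v)}$ remaining representable. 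The fixed-point computation above, in which the chosen degree $k=\Theta(\sqrt v/\log v)$ matches the induced time exponent $\log d/(32\log\log d)=\Theta(\sqrt v/\log v)$, shows these requirements are consistent, which is what closes the argument.
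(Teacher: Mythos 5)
Your proposal is correct and follows essentially the same route as the paper: reuse the \Cref{prop:poly3} reduction verbatim and retune the single parameter to degree $r=\Theta(\sqrt{v}/\log v)$, so that $\log d=\Theta(\sqrt{v})$, $\log\log d=\Theta(\log v)$, and $d^{\log d/(32\log\log d)}=2^{O(v/\log v)}$, with the horizon reward small enough that a solver running within this budget sees only zeros there. The only cosmetic difference is that your final error accounting quotes $1/10$ where the paper, after conditioning on all horizon rewards being zero (the simulator step), lands at $7/8$ success, i.e.\ error $1/8$ -- which is still within the claimed bound.
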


Before we prove these results, we give a brief outline of our reduction from \usat~to \linear{3}. On a high level, we construct an MDP where the goal is to "search" for the solution $w^*$ to a~\usat~instance with $v$ variables. In particular, at each time, the agent is given an unsatisfied clause and asked to flip assignment for a variable present in the clause. Notice that since the clause is unsatisfied, there must be at least one variable whose assignment differs from the solution and therefore, the agent can ``reach'' the solution in at most $d(w,w^*)$ steps. To incentivize the agent, if the agents at time $l$ finds the solution i.e. $w = w^*$ or reaches the end of the MDP i.e. $l = H$, it receives reward according to the following degree-$r$ polynomial
\begin{align*}
    g(l, w) = \left(1 - \frac{l + \dist(w, w^*)}{H + v}\right)^{r}.
\end{align*}
We show how to build an MDPs from a \usat~instance in \Cref{sub:const}. Furthermore, we show that the optimal value functions $V^*$ and $Q^*$ for the constructed MDP are linear in $d = O(v^r)$-dimensional features. Since the expected reward at last layer of the MDP is $O(v^{-r^2})$ (which can be replaced with $0$ for any $\poly(d)$ time RL algorithm), the only non-zero reward is achieved by solving the underlying \usat~instance, proving our reduction. We give a formal argument in \Cref{sub:alg}, where we show how to build a randomized algorithm for \usat~using a randomized algorithm for \linear{3}. In \Cref{sub:setting}, we discuss the two different settings of parameters which will prove \Cref{prop:poly3} and \Cref{prop:exp3}.

\begin{figure}
    \centering
    \begin{tikzpicture}
        [
          grow                    = right,
          sibling distance        = 3em,
          level distance          = 6em,
          edge from parent/.style = {draw, -latex},
          every node/.style       = {font=\footnotesize}
        ]
        \node [special, label={[align=center]left:$x_1 \lor x_2 \lor x_3$\\ $(-1,-1,-1,-1)$ \\ $l=0$}] {}
          child { node [dummy, label={[align=center]right:$x_1 \lor x_2 \lor \bar x_3$\\$(-1,-1,1,-1)$\\$l=1$}] {}
            edge from parent node [below] {$x_3$} }
            child { node [dummy] {}
            edge from parent node [above] {$x_2$} }
          child { node [special, label={[align=center, xshift=-1em]above:$\bar x_1 \lor x_2 \lor x_3$\\ $(1,-1,-1,-1)$ \\$l=1$}] {}
          child { node [dummy, label={[align=center]right:$\bar x_1 \lor x_2 \lor \bar x_3$\\$(1,-1,1,-1)$\\$l=2$}] {}
            edge from parent node [below] {$x_3$} }
            child { node [dummy] {}
            edge from parent node [above] {$x_1$}}
            child { node [special, label={[align=center, xshift=-1em]above:$\bar x_1 \lor x_3 \lor x_4$\\ $(1,1,-1,-1)$ \\ $l=2$}] {}
              child { node [dummy] {}
                edge from parent node [below] {$x_1$} }
              child { node [dummy] {}
                edge from parent node [above] {$x_3$}}
              child { node [special, label={[align=center]above:$(1,1,-1,1)$ \\$l=3$}] {}
              child { node [special] {$\bot$}
              edge from parent node [above] {} }
                      edge from parent node [above] {$x_4$} }
              edge from parent node [above] {$x_2$} }
                    edge from parent node [above] {$x_1$} };
      \end{tikzpicture}
\caption{Example construction of $3$-action MDP $M_\varphi$ from a \cnf~formula $(x_1 \lor x_2 \lor x_3) \land (\bar x_1 \lor x_2 \lor  x_3) \land ( \bar x_1 \lor x_3 \lor x_4 ) \land (x_1 \lor x_2 \lor \bar x_3) \land (\bar x_1 \lor x_2 \lor \bar x_3) \land (\bar x_3 \lor \bar x_3 \lor \bar x_3)\land (x_1 \lor x_1 \lor x_1)$. The only satisfying assignment for this formula is $(1,1,-1,1)$. The states are labelled by the corresponding assignment and unsatisfied clause which decides the available actions. The states in the optimal path are colored in red.}
\label{fig:lower3}
\end{figure}
\subsection{From \cnf~formulas to 3-action MDPs}
\label{sub:const}
We will start by defining a mapping from an input of \usat~problem: \cnf~formula $\varphi$ with $v$ variables and $O(v)$ clauses to an MDP $M_{\varphi}$ with $3$ actions and $H = O(d)$ horizon with optimal value functions linear in $d$ dimensions. Our informal goal is to design an MDP $M_{\varphi}$ such that finding a good policy also implies finding the satisfying assignment for the formula $\varphi$. We now formally describe the MDP $M_\varphi$ when the formula $\varphi$ has a unique satisfying assignment $w^* \in \{-1,1\}^v$ and later show how the MDP $M_\varphi$ differs when the formula $\varphi$ has no solution. See \Cref{fig:lower3} for an example.

\paragraph{Transitions.} In our setting, it will be useful to visualize an MDP as a tree, where nodes represent states and edges represent actions. A policy is then a sequence of actions or equivalently a path in the aforementioned tree. The MDP $M_\varphi$ is a ternary tree i.e. each state/node in the tree has $3$ children. The transitions/dynamics are deterministic i.e. the first action goes to first child, the second action goes to second child and so on.

\paragraph{Assignments.} Each state is associated with an assignment to the $v$ variables i.e. a binary vector in $\{-1, 1\}^v$ and a natural number $l$ denoting the depth of the state. Our goal here is to choose assignments such that it is always possible to choose an action which decreases the hamming distance to the satisfying assignment. The root in the tree is associated with the all zeroes assignment $(-1, -1, \ldots, -1)$. For any state $s$ with a non-satisfying assignment $w = (w_1, w_2, \ldots, w_v) \neq w^*$, the assignment associated to the three children are as follows. Since $w$ is not a satisfying assignment, consider the first unsatisfied clause with variables $x_{i_1}, x_{i_2}, x_{i_3}$. The first child is associated with the assignment where the $i_1$-th bit of $w$ is flipped, the second child is associated with vector where $i_2$-th bit is flipped and so on. More formally, the assignment associated to $j$-th child is $(w'_1, w'_2, \ldots, w'_v)$ where $w'_k = \neg w_k$ if $k = i_j$ and $w'_k = w_k$ otherwise. The two exceptions to this are (i) states with the satisfying assignment $w^*$ and (ii) states at the last level $H$. For such states, all actions go to the end state $\bot$.

\paragraph{Rewards.} To ensure that finding good policies implies finding the satisfying assignment in our MDP, we will only give rewards when a satisfying assignment is found or at the last layer. More formally, the rewards everywhere are zero except on (i) states with the satisfying assignment $w^*$ and (ii) states on the last level $H$. In both the cases above, say the state is at level $l$ with assignment $w$, then the associated reward distribution for any action is a Bernoulli distribution $Ber(g(l, w))$ where 
\begin{align*}
    g(l,w) = \left(1 - \frac{l + \dist(w, w^*)}{H + v}\right)^{r}
\end{align*} 
and the Bernoulli distribution $Ber(p)$ is $1$ with probability $p$ and $0$ with probability $1-p$. Here $r$ is a parameter which we will specify in \Cref{sub:setting}. When the formula $\varphi$ has no satisfying assignment, all rewards are $0$.
Note that in our simulation (\Cref{sub:alg}), we don't know/use $w^*$ and instead use an approximate reward function that is easy to compute.

\paragraph{Linear Optimal Value Functions.} We next show that in the MDP $M_\varphi$, the optimal value functions $V^*$ and $Q^*$ can be written as a linear function of $d=O(v^r)$ dimensional features $\psi$, where $\psi(s)$ or $\psi(s,a)$ depends only on $w$, the corresponding assignment, and $l$, the depth of the state.

\begin{proposition}
\label{prop:linear}
For any state $s$ in level $l$ with assignment $w$ and action $a$,
\begin{enumerate}[label=(\roman*)]
    \item the optimal value function is $V^*(s) = g(l,w)$.
    \item for large enough $v$, there exists features $\psi(s), \psi(s,a) \in \R^d$ with feature dimension $d \le 2 v^r$ depending only on state $s$ and action $a$; and $\theta \in \R^d$ depending only on $w^*$ such that $V^*$ and $Q^*$ can be written as a linear function of features $\psi$ i.e. $V^*(s) = \langle \theta, \psi(s)\rangle$ and $Q^*(s,a) = \langle \theta, \psi(s,a)\rangle$.
\end{enumerate}
\end{proposition}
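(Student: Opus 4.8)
The plan is to prove part (i) by backward induction on the depth $l$ (from the leaves at level $H$ up to the root), and then to deduce part (ii) by massaging the closed form $V^*(s) = g(l,w)$ into an inner product of a tensor power depending only on $s$ against a tensor power depending only on $w^*$.

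For part (i), I first record the two base cases. If $w = w^*$ or $l = H$, then every action transitions directly to $\bot$ and collects the terminal reward with mean $g(l,w)$, so $V^*(s) = g(l,w)$. For the inductive step, fix a state $s$ at level $l < H$ with $w \neq w^*$; its three children sit at level $l+1$ and their assignments $w^{(1)}, w^{(2)}, w^{(3)}$ are obtained from $w$ by flipping one of the three variables of the first clause left unsatisfied by $w$. Since the immediate reward at $s$ is $0$, determinism of the transitions gives $Q^*(s,a_j) = V^*(\text{child}_j) = g(l+1, w^{(j)})$ by the induction hypothesis, so $V^*(s) = \max_j g(l+1, w^{(j)})$. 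The crucial structural fact—inherited from the promise that $w^*$ satisfies $\varphi$—is that at least one flip strictly decreases the Hamming distance to $w^*$: because the chosen clause is falsified by $w$ but satisfied by $w^*$, some variable of the clause must disagree between $w$ and $w^*$, and flipping it yields $\dist(w^{(j)}, w^*) = \dist(w,w^*) - 1$, while the other flips can only increase the distance. Since $x \mapsto (1 - x/(H+v))^r$ is nonincreasing on $[0, H+v]$, the maximum is attained by the distance-decreasing child, for which $(l+1) + \dist(w^{(j)},w^*) = l + \dist(w,w^*)$; hence $\max_j g(l+1, w^{(j)}) = g(l, w)$, which closes the induction.

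For part (ii), the key algebraic observation is that $\dist(w, w^*) = (v - \langle w, w^*\rangle)/2$ for $w, w^* \in \{-1,1\}^v$, so that, writing $N = H + v$, the base of $g$ becomes a single bilinear form: $N - l - \dist(w,w^*) = \langle \phi(s), \phi^*\rangle$, where $\phi(s) = (H + v/2 - l,\, w_1, \ldots, w_v) \in \R^{v+1}$ depends only on the state and $\phi^* = (1,\, w_1^*/2, \ldots, w_v^*/2) \in \R^{v+1}$ depends only on $w^*$. Consequently $V^*(s) = g(l,w) = N^{-r}\langle \phi(s), \phi^*\rangle^r = \langle \theta, \psi(s)\rangle$ once I set $\psi(s)$ to be the ($N^{-r}$-scaled) symmetric $r$-th tensor power of $\phi(s)$ and $\theta$ the symmetric $r$-th tensor power of $\phi^*$; both live in $\mathrm{Sym}^r(\R^{v+1})$, whose dimension $\binom{v+r}{r}$ is at most $2v^r$ once $v$ is large enough relative to $r$. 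By construction $\psi$ depends only on $s$ and $\theta$ only on $w^*$, as required.

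It remains to produce features for $Q^*$ with the same $\theta$. Here I would treat the two cases uniformly: for $w = w^*$ or $l = H$ the successor is $\bot$ and $Q^*(s,a) = g(l,w) = V^*(s)$, so I set $\psi(s,a) = \psi(s)$; otherwise the immediate reward vanishes and $Q^*(s,a_j) = V^*(\text{child}_j)$, so I set $\psi(s,a_j) = \psi(\text{child}_j)$. In either case $Q^*(s,a) = \langle \theta, \psi(s,a)\rangle$ with the same $\theta$, since each child's value is itself $g$ evaluated at the child's (level, assignment) pair and hence already expressed through the same tensor-power features. I expect the genuinely delicate points to be (a) verifying the distance-decreasing step of part (i), which is exactly where the logical structure of \usat\ enters and must be argued from ``clause falsified by $w$, satisfied by $w^*$,'' and (b) pinning down the dimension bound $\binom{v+r}{r} \le 2v^r$, which is where the ``large enough $v$'' hypothesis is spent; the remaining linear algebra is routine once the inner-product rewriting of $\dist$ is in hand.
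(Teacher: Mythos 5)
Your proof is correct and follows essentially the same route as the paper: both arguments rest on the same two facts (an unsatisfied clause always offers a distance-decreasing flip, and $\dist(w,w^*) = (v - \langle w, w^*\rangle)/2$ makes $g(l,w)$ a degree-$r$ polynomial in $w^*$), and your symmetric-tensor-power features are exactly the paper's monomial-coefficient features in different notation, with the same $O(v^r)$ dimension count. The only organizational difference is that you prove part (i) by backward induction through the Bellman equation, whereas the paper argues directly by exhibiting the distance-decreasing policy and upper-bounding every other policy via $l' - l \ge \dist(w,w')$; your explicit case split for $Q^*$ at reward-bearing states is in fact slightly more careful than the paper's one-line remark that $Q^*(s,a) = V^*(P(s,a))$.
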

\begin{proof}
    To prove our first claim, we start by showing that there exists a policy $\pi$ that achieves this value for each state. Let $\pi$ be the policy which for any state $s$ with assignment $w\neq w^*$ chooses the action which decreases the hamming distance $\dist(w,w^*)$ by $1$. Note that one such action always exists in our construction, since a satisfying assignment satisfies all clauses. Therefore, from a state $s$ at level $l$ with assignment $w$, we can reach a state with assignment $w_1$ such that either (i) $w_1$ is a satisfying assignment or (ii) $w_1$ is at the last level and on the optimal path from $w$ to $w^*$ i.e. $\dist(w, w^*) = \dist(w, w_1) + \dist(w_1, w^*)$. In both cases, \[
        V^\pi(s) = \left(1 - \frac{l + \dist(w, w_1) + \dist(w_1, w^*)}{H + v}\right)^{r} = g(l,w)
    \] 
    Next, for any other policy $\pi'$ that ends on state $s'$ at level $l'$ with assignment $w'$ (i.e. either $l' = H$ or $w' = w^*$), we have \[
        V^{\pi'}(s) = \left(1 - \frac{l'+ \dist(w', w^*)}{H + v}\right)^{r} \leq \left(1 - \frac{l+ \dist(w, w') + \dist(w', w^*)}{H + v}\right)^{r} \leq g(l,w)
    \] where the first inequality follows from $l' - l \geq \dist(w, w')$. This proves our first claim about $V^*$ i.e. $V^*(s) = g(l,w)$.
    
     To prove our second claim, that $V^*$ and $Q^*$ can be written as a linear function of features $\psi$, we will show that $V^*(s)$ can be written as a polynomial of degree at most $r$ in $w^*$. To see why this is enough, we set $\theta$ to be all monomials in $w^*$ of degree at most $r$. That is, each coordinate of $\theta$ corresponds to a multiset $S \subset [v]$ of size $|S| \le r$, and its value is $\theta_S = \prod_{i \in S}w^*_i$.  
     We set $\psi(s)$ to be the corresponding coefficients in the polynomial $V^*$. Then, we can write $V^*(s) = \langle \theta, \psi(s)\rangle$. 
     Since, there are at most $\sum_{i=0}^r v^i \leq 2 v^r$ many coefficients we can set the feature dimension as $d = 2 v^r$.

    Finally, we prove that $V^*(s)$ can be written as a polynomial of degree at most $r$ in $w$ and $w^*$. Firstly hamming distance $\dist(w,w^*)$ is linear in both $w$ and $w^*$ i.e. \[
        \dist(w, w^*) = \frac{v - \langle w, w^*\rangle}{2}
    \] Our claim follows from noting that $g(l,w)$ is a polynomial of degree $r$ in $\dist(w, w^*)$. Note that linear $V^*$ implies linear $Q^*$ in deterministic MDPs for $\psi(s,a) = \psi(P(s,a))$, since by definition, in MDPs with deterministic transition, $Q^*(s,a) = V^*(P(s,a))$.
\end{proof}

Even though $\psi(s)$ does not depend on $w^*$, unlike the constructions of \cite{weisz21sl2,weisz2021tensorplan,wang2021exponential}, $\psi(s)$ does depend on the MDP $M_\varphi$ making this construction statistically easy but computationally hard to solve. 

\subsection{From RL algorithms to 3-SAT algorithms} \label{sub:alg}

We now build a randomized algorithm $\alg_{SAT}$ for \usat~using a randomized algorithm $\alg_{RL}$ for the RL problem. However, as mentioned before, since the runtime for $\alg_{RL}$ accrues only constant runtime for each call to the MDP oracle, to efficiently build $\alg_{SAT}$ using $\alg_{RL}$, we need to be able to efficiently simulate the calls to MDP oracle, namely: calls to the reward function, the transition function and the features. To do so, we build an ``approximate'' simulator $\bar M_{\varphi}$ for the MDP oracle $M_\varphi$. The simulator $\bar M_{\varphi}$ is exactly MDP $M_\varphi$ in terms of transition function and features associated with the MDP $M_\varphi$, but differs in the reward function at the last layer which is always $0$ for the simulator $\bar M_{\varphi}$. This modification is crucial for an efficient reduction because unlike transitions and features for any state which can be computed in time $\text{poly}(d)$ on the MDP $M_\varphi$, the rewards at the last layer when $\dist(w,w^*) \neq 0$ require access to $w^*$ which can not be done efficiently.  With the purposed modification, we can execute each call to simulator $\bar M_{\varphi}$ in time $\text{poly}(d)$.

\paragraph{Algorithm.} On input \cnf~formula $\varphi$, $\alg_{SAT}$ runs the algorithm $\alg_{RL}$ replacing each call to MDP oracle $M_{\varphi}$ with the corresponding call to simulator $\bar M_{\varphi}$. Recall that the output for the RL algorithm in our setting is a sequence of actions. If the sequence of actions returned by $\alg_{RL}$ ends on a state with assignment $w$, $\alg_{SAT}$ outputs YES if $w$ is the satisfying assignment and returns NO otherwise.

\paragraph{Correctness.} We set the horizon $H = v^r$. We will assume throughout that $r\geq 2$ and that the runtime of $\alg_{RL}$ is $\leq v^{r^2/4}$. Different settings of $r$ satisfying these assumptions will prove \Cref{prop:poly3} and \Cref{prop:exp3} for $3$-action MDPs, which we will discuss in \Cref{sub:setting}. To complete our reduction, we will show the following:  \begin{enumerate}
    \item[(i)] If algorithm $\alg_{RL}$ outputs a policy $\pi$ such that $V^\pi > V^* - 1/4$, then $\alg_{SAT}$ on \cnf~formula $\varphi$ outputs YES if $\varphi$ is satisfiable and NO otherwise.
    \item[(ii)] If $\alg_{RL}$ with access to MDP oracle $M_\varphi$ outputs a policy $\pi$ such that $V^\pi > V^* - 1/4$ with error probability $1/10$, then $\alg_{RL}$ with access to simulator $\bar M_\varphi$ outputs a policy $\pi$ such that $V^{\pi} > V^* - 1/4$ with error probability $1/8$.
\end{enumerate}
These together will show that $\alg_{SAT}$ solves \usat~with error probability $\leq 1/8$. We start by proving that if $\alg_{RL}$ succeeds on MDP $\bar M_\varphi$, then $\alg_{SAT}$ succeeds on \cnf~formula $\varphi$. This follows from the fact that any good policy in the MDP $M_\varphi$ must reach a state with satisfying assignment $w^*$.

\begin{proposition} \label{prop:rl-sat}
    Suppose $r > 1$ and horizon $H = v^r$. If $\alg_{RL}$  outputs a policy $\pi$ such that $V^\pi > V^* - 1/4$, then $\alg_{SAT}$ on \cnf~formula $\varphi$ outputs YES if $\varphi$ is satisfiable and NO otherwise.
\end{proposition}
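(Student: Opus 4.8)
The plan is to case on whether $\varphi$ is satisfiable. In the satisfiable case (where \usat~promises a \emph{unique} satisfying assignment $w^*$) the whole argument reduces to one quantitative claim: any policy whose value exceeds $V^* - 1/4$ must terminate at a state carrying assignment $w^*$. Since $\alg_{SAT}$ outputs YES exactly when the terminal assignment $w$ satisfies $\varphi$, establishing this claim immediately gives correctness in this case, and the unsatisfiable case will be essentially vacuous.

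For the satisfiable case, I would first pin down $V^*$ and the value of a ``bad'' terminal state. By \Cref{prop:linear}(i) the optimal value at the root $s_0$ (level $0$, assignment $w_0 = (-1,\dots,-1)$) is
\[
    V^* = g(0,w_0) = \left(1 - \frac{\dist(w_0,w^*)}{H+v}\right)^{r} \ge \left(1 - \frac{v}{H+v}\right)^{r} = \left(\frac{v^r}{v^r+v}\right)^{r},
\]
using $\dist(w_0,w^*)\le v$ and $H = v^r$; for $r>1$ and $v$ large this tends to $1$, so in particular $V^* - 1/4 > 1/2$. Because all rewards vanish except at terminating states, a policy $\pi$ halting at level $l'$ with assignment $w'$ (so $w'=w^*$ or $l'=H$) earns exactly $V^\pi = g(l',w')$. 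If $w'\ne w^*$ then necessarily $l'=H$ and $\dist(w',w^*)\ge 1$, whence
\[
    V^\pi = g(H,w') = \left(\frac{v - \dist(w',w^*)}{H+v}\right)^{r} \le \left(\frac{v-1}{v^r+v}\right)^{r},
\]
which is far below $1/2$ for $r>1$ and $v$ large. This contradicts $V^\pi > V^* - 1/4 > 1/2$, so a good policy must end at $w'=w^*$; its assignment then satisfies $\varphi$ and $\alg_{SAT}$ outputs YES.

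When $\varphi$ is unsatisfiable, the construction sets all rewards to $0$, so $V^\pi = V^* = 0$ and the condition $V^\pi > V^* - 1/4$ holds vacuously for whatever terminal assignment $w'$ the policy reaches; since no assignment satisfies $\varphi$, the check in $\alg_{SAT}$ fails and it correctly returns NO. The only real work is the gap estimate in the satisfiable case, and I expect that to be the main obstacle: one must verify that the single choice $H = v^r$ with $r>1$ simultaneously drives $V^*$ above $3/4$ and crushes $g(H,w')$ below $1/4$ for every $w'\ne w^*$, so that the $1/4$ slack in the notion of a good policy can never be ``spent'' reaching a non-satisfying state at the horizon. This is exactly where the horizon scaling and the polynomial degree $r$ enter essentially.
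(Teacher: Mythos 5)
Your proposal is correct and follows essentially the same route as the paper's proof: lower-bound $V^*$ via $g(0,w_0)\ge(1-\tfrac{v}{H+v})^r\to 1$, upper-bound the value of any terminal state with $w'\ne w^*$ (necessarily at level $H$) by $(\tfrac{v}{H+v})^r=v^{-r(r-1)}\ll 1/4$, and conclude that the $1/4$ slack cannot be spent on a non-satisfying horizon state, with the unsatisfiable case handled trivially since $\alg_{SAT}$ only answers YES on a verified satisfying assignment. The only cosmetic difference is that you target $V^*>3/4$ where the paper settles for $V^*\ge 1/2$; both suffice for large enough $v$.
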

\begin{proof}
    Since algorithm $\alg_{SAT}$ always returns NO on an unsatisfiable formula, we restrict our attention to a satisfiable formula $\varphi$.  In the MDP $M_\varphi$, (i) rewards are ``very small'' everywhere except on reaching the satisfying assignment i.e. the expected reward at the last layer in the MDP $M_\varphi$ is upper bounded by (for large enough $v$ and $r >1$) \begin{align*}
        \left(1 - \frac{H}{H + v}\right)^r = \left(\frac{v}{H + v}\right)^r \leq v^{-r^2 + r} < 1/4
    \end{align*} and (ii) the optimal value $V^*$ is large \begin{align*}
        V^* \geq \left(1 - \frac{v}{H + v}\right)^r = \left( 1 + \frac{v}{v^{r}}\right)^{-r} \geq  1- \frac{rv}{v^r} \geq \frac{1}{2}
    \end{align*} where the second last inequality follows from Bernoulli's inequality and the last inequality holds for large enough $v$ and $r>1$. Therefore, if the value of policy is large i.e. $V^{\pi} > V^* - 1/4$, then the policy $\pi$ (and therefore the corresponding sequence of actions) has to end on a state with the satisfying assignment $w^*$. By construction of $\alg_{SAT}$, this implies $\alg_{SAT}$ will succeed on the formula $\varphi$.
\end{proof}

Since we can not simulate the rewards on MDP oracle $M_\varphi$ efficiently, our reduction runs the algorithm $\alg_{RL}$ on an approximate simulator $\bar M_\varphi$. However, it's not clear why $\alg_{RL}$ would still succeed when each call to MDP oracle is replaced by a call to the simulator $\bar M_\varphi$. The following proposition shows that in fact  $\alg_{RL}$ would succeed on the outputs of simulator $\bar M_\varphi$ albeit with a smaller constant probability.
\begin{proposition}\label{prop:remove-bar}
    Suppose $r \geq 2$ and horizon $H = v^r$. Suppose $\alg_{RL}$ with access to MDP oracle $M_\varphi$ runs in time $v^{r^2/4}$ and outputs a policy $\pi$ such that $V^\pi > V^*  - 1/4$ with error probability $1/10$. Then $\alg_{RL}$ with access to simulator $\bar M_\varphi$, still running in time $v^{r^2/4}$, outputs a policy $\pi$ such that $V^{\pi} > V^* - 1/4$ with error probability $1/8$.
\end{proposition}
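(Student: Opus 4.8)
We need to show that replacing the true MDP oracle $M_\varphi$ with the approximate simulator $\bar M_\varphi$ (which zeroes out the last-layer rewards) only degrades the success probability from $1/10$ to $1/8$. The key observation is that the two oracles differ only in the reward distributions at the last layer when $\mathrm{dist}(w,w^*) \neq 0$, and those rewards are tiny Bernoulli random variables. So the plan is a coupling/total-variation argument: the algorithm $\alg_{RL}$ cannot distinguish $M_\varphi$ from $\bar M_\varphi$ unless it happens to observe a nonzero reward sample at the last layer, which is a low-probability event over the course of its $v^{r^2/4}$ queries.

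**Main steps.** First I would bound the per-query probability of a distinguishing event. The only place where $M_\varphi$ and $\bar M_\varphi$ disagree is a last-layer reward query returning $1$; under $\bar M_\varphi$ this never happens, while under $M_\varphi$ it happens with probability $g(H,w) = \bigl(1-\frac{H+\mathrm{dist}(w,w^*)}{H+v}\bigr)^r \le \bigl(\frac{v}{H+v}\bigr)^r \le v^{-r^2+r}$, reusing exactly the last-layer bound already established inside the proof of \Cref{prop:rl-sat}. Second, I would take a union bound over all queries: since $\alg_{RL}$ runs in time $v^{r^2/4}$ it makes at most $v^{r^2/4}$ reward queries, so the probability that it ever sees a distinguishing sample is at most $v^{r^2/4}\cdot v^{-r^2+r} = v^{-3r^2/4 + r}$, which is $o(1)$ and in particular below $1/8 - 1/10 = 1/40$ for $r\ge 2$ and large enough $v$. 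Third, I would formalize this as a coupling: couple the two executions so that every query returns the identical answer except on last-layer reward queries, and couple those so they agree whenever the $M_\varphi$ sample is $0$. Under this coupling the two runs of $\alg_{RL}$ produce identical transcripts, hence identical output policies, on the complement of the distinguishing event.

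**Assembling the bound.** On the coupled event where no distinguishing sample ever appears, the output of $\alg_{RL}$ on $\bar M_\varphi$ equals its output on $M_\varphi$. By hypothesis the latter satisfies $V^\pi > V^* - 1/4$ except with probability $1/10$. Hence the probability that $\alg_{RL}$ on $\bar M_\varphi$ fails is at most the failure probability on $M_\varphi$ plus the probability of the distinguishing event, i.e. $1/10 + v^{-3r^2/4+r} \le 1/10 + 1/40 < 1/8$ for $r \ge 2$ and $v$ large. This gives the claimed error probability $1/8$, and the runtime is unchanged since $\bar M_\varphi$ answers each query in $\mathrm{poly}(d)$ time exactly as $M_\varphi$ does on the transition/feature queries, which completes the argument.

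**Anticipated obstacle.** The conceptual content is entirely in the coupling step, and the one subtlety to get right is that $\alg_{RL}$ is adaptive and randomized: its query sequence depends on earlier answers, so the ``at most $v^{r^2/4}$ last-layer reward queries'' count and the per-query failure bound must be combined carefully rather than treated as independent. I would handle this by conditioning on the algorithm's internal randomness and the transcript up to each query, noting that the distinguishing probability at each step is bounded by $v^{-r^2+r}$ regardless of history (since it depends only on the fixed reward distribution at the queried last-layer state), so a martingale/union bound over the (at most $v^{r^2/4}$) steps goes through unchanged. Verifying that the arithmetic $v^{-3r^2/4+r} < 1/40$ holds in both parameter regimes (polynomial and sub-exponential $d$) for $r \ge 2$ is routine and I would defer it to the parameter-setting discussion.
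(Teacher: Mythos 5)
Your proposal is correct and follows essentially the same route as the paper: both arguments observe that $M_\varphi$ and $\bar M_\varphi$ can only be told apart by a nonzero last-layer reward sample, bound that event's probability by $v^{r^2/4}\cdot v^{-r^2+r}$ via a union bound over the at most $v^{r^2/4}$ adaptive queries, and note that conditioned on all last-layer rewards being zero the two executions are identically distributed (the paper phrases this as a conditional-probability calculation yielding $7/8$, you phrase it as a coupling with an additive error $1/10+o(1)$; the two are interchangeable). The only nit is the final arithmetic, where $1/10+1/40$ equals $1/8$ exactly rather than being strictly less, which still suffices for the claimed error probability.
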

\begin{proof}
    Let $\Pr_{M_\varphi}$ and $\Pr_{\bar M_\varphi}$ denote the distribution on the observed rewards and output policies induced by the algorithm $\alg_{RL}$ when running on access to MDP oracle $M_\varphi$ and simulator $\bar M_\varphi$ respectively. Let $R_i$ denote the reward received on the last layer at the end of $i$-th trajectory. Let $T$ be the total number of trajectories sampled by algorithm $\alg_{RL}$ when running on access to MDP oracle $M_\varphi$. By our assumption, $\alg_{RL}$ runs in time $v^{r^2/4}$ and therefore $T\leq v^{r^2/4}$. Since the expected reward at the last layer in the MDP $M_\varphi$ is upper bounded by (for large enough $v$ and $r \geq 2$) \begin{align*}
        \left(1 - \frac{H}{H + v}\right)^r = \left(\frac{v}{H + v}\right)^r \leq v^{-r^2 + r} \leq v^{-\frac{r^2}{2}}
    \end{align*} and  and the algorithm only visits at most $v^{r^2/4}$ states on last layer, we get by the union bound that with high probability all the rewards at the last level are zero. More precisely (and assuming $v$ is large enough),
    \begin{align*}
        \Pr_{M_\varphi}\left[R_i = 0 ~\forall i \in [T]\right] \geq 1 - v^{-r^2/4} \ge \frac{4}{5}
    \end{align*} 
    We say $\alg_{RL}$ succeeds with access to $M_\varphi$ (or $\bar M_\varphi$) if the output policy $\pi$ after running for time at most $v^{r^2/4}$ satisfies $V^\pi > V^* - 1/4$. Using the above reasoning and the assumption that $\alg_{RL}$ succeeds with access to MDP oracle $M_\varphi$ with probability $9/10$ implies 
    \begin{align*}
    \Pr_{M_\varphi}\left[\alg_{RL}~\text{succeeds with access to}~M_\varphi \mid R_i = 0 ~\forall i \in [T]\right] &\geq \frac{\frac{9}{10}-\frac{1}{5}}{\frac{4}{5}} =  
    \frac{7}{8}
    \end{align*} Note that the marginal distributions $\Pr_{M_\varphi}$ and $\Pr_{\bar M_\varphi}$ on output policy $\pi$ given $R_i = 0~\forall i \in [T]$ are exactly the same because MDP oracle $\bar M_\varphi$ and simulator $M_\varphi$ only differ on last layer rewards. This implies 
    \begin{align*}
        &\Pr_{\bar M_\varphi}\left[\alg_{RL}~\text{succeeds with access to}~\bar M_{\varphi} \mid R_i = 0 ~\forall i \in [T]\right] \\
        &= \Pr_{M_\varphi}\left[\alg_{RL}~\text{succeeds with access to}~M_{\varphi} \mid R_i = 0 ~\forall i \in [T]\right]
        \end{align*}
    Since, $\Pr_{\bar M_\varphi}\left[R_i = 0 ~\forall i \in [T]\right] = 1$, we conclude that
    \begin{align*}
    \Pr_{\bar M_\varphi}\left[\alg_{RL}~\text{succeeds with access to}~\bar M_{\varphi}\right] &\geq  \frac{7}{8}
    \end{align*} 
\end{proof}

\subsection{Setting of Parameters} 
\label{sub:setting}
It follows from \Cref{prop:linear,prop:rl-sat,prop:remove-bar} that if \linear{3}~with feature dimension $d = 2 v^r$ can be solved in time $v^{r^2/4}$ with error probability $1/10$, then \usat~with $v$ variables can be solved in time $d \cdot v^{r^2/4}$ with error probability $1/8$ (here the extra $d$ factor is because each call to the simulator $\bar M_\varphi$ takes $d$ time). In this section, we discuss the two different settings of $r$ we use to prove our lower bounds. As we increase $r$, we decrease the expected reward available to the algorithm at the last layer on the order of $v^{-O(r^2)}$, making the problem harder. However, increasing $r$ also increases the feature dimension on the order of $v^{r}$. This non-polynomial gap in the feature dimension and expected reward at the last layer will give our main reduction. 


In the first setting, we will set $r$ to be a constant wrt number of variables $v$ and prove that a polynomial algorithm for \linear{3}~implies a polynomial algorithm for \usat. 



\begin{proof}[Proof of \Cref{prop:poly3}]
For any $q\geq 1$, we set \begin{equation}
    \label{eq:rv}
    r = 8q\, .
\end{equation} 
Note that $q\geq 1$ implies $r\geq 2$. Therefore, to prove our proposition, we just need to show \begin{align}
    \label{eq:to1}d^q &\leq v^{r^2/4}\\
    \label{eq:to2}d \cdot v^{r^2/4} &\leq v^{8q + 16q^2 + 1}
\end{align}
 under this setting of $d$ and $r$. Here the first equation bounds the time complexity of \linear{3}~in terms of feature dimension $d$ and the second equation bounds the time complexity of \usat~in terms of the number of variables $v$.  \Cref{eq:to1} is true as
    \begin{align*}
        v^{\frac{r^2}{4}} = (v^{r})^{\frac{r}{4}} \geq d^{\frac{r}{8}} = d^{q}
    \end{align*} where the first inequality follows from $d \leq v^{2r}$ for large enough $v$ and the last equality follows from \Cref{eq:rv} above. \Cref{eq:to2} holds since
    \begin{align*}
        d \cdot v^{r^2/4} =  2 v^{r + r^2/4} = O(v^{8q + 16q^2}),
    \end{align*} where the first equality follows from $d = 2 v^r$ and the last equality follows from \Cref{eq:rv} for large enough $v$.
\end{proof}

In \Cref{sec:app-lower}, we prove a more general version, \Cref{lemma:3hardnessapp}, which shows that a quasi-polynomial algorithm for \linear{3}~implies a quasi-polynomial algorithm for \usat. 

In the second setting, we set $r^2$ to be almost linear in the number of variables $v$. This will prove \Cref{prop:exp3} for $3$-action MDPs. 

\begin{proof}[Proof of \Cref{prop:exp3}]
    This follows exactly as proof of \Cref{prop:poly3}. We set \[r = \left\lceil\frac{\sqrt{v}}{\log v}~\right\rceil\, .\] We proceed to show that (i) the time complexity of~\linear{3}~can be bounded by $v^{r^2/4}$ (ii) $d \cdot v^{r^2/4}$, which is the time complexity of~\usat~if (i) holds, can be bounded by $2^{O(v/\log v)}$.

With this setting, the time complexity of \linear{3}~simplifies to \begin{align*}
        d^{\frac{\log d}{32 \log \log d}} \leq v^{\frac{2r\log d}{32 \log \log d}} \leq v^{\frac{4r^2\log v}{32\log (r \log v)}} \leq v^{\frac{ 8r^2\log v}{32\log v}} = v^{\frac{r^2}{4}}
    \end{align*} where the first and second inequality follows from $v^r \leq d \leq v^{2r}$ and third inequality follows from our setting of $r$.    

    Similarly, the time complexity of~\usat~simplifies to\begin{align*}
        d \cdot v^{r^2/4} =  2 v^{r + \frac{r^2}{4}} \leq v^{r^2} \leq v^{\frac{4v}{\log^2 v}} = 2^{\frac{4v}{\log v}}
    \end{align*}
    where the first equality follows from $d = 2 v^r$, first inequality follows for $r \geq 2$ and large enough $v$ and the second inequality follows from our setting of $r$ above. 
\end{proof}
\section{Lower Bound for MDPs with 2 actions}
\label{sec:lower2}
In this section, we prove computational lower bound for \linear{2}. Similar to \Cref{sec:lower-3}, our proof is based on reduction from \usat.
We will modify the MDP $M_{\varphi}$ with three actions into $ M_{\varphi}$ by introducing some intermediate states. See Figure \ref{fig:2action} for an example of this modification for a single state.

\paragraph{Intermediate states.} Recall that in $M_{\varphi}$ each state is associated with an assignment. Let the $i$-th clause, which consists of three variables $x_{i_1}, x_{i_2}, x_{i_3}$, be the first unsatisfied clause. Then, the three actions available each correspond to flipping one of the variable in the clause. We will replace them by two actions: while one action still flips the last variable $x_{i_3}$, the other action leads to an intermediate state $s_{[i_1, i_2]}$. At the state $s_{[i_1, i_2]}$, two actions are available: one flips $x_{i_1}$ and the other flips $x_{i_2}$.

\paragraph{Depth of state.} In the 3 action MDP $M_{\varphi}$, the depth of a state is simply the length of the path that ends at the state. Here, we define the depth to be the number of non-intermediate states included in the path. That being said, the intermediate states will have the same depth as their parents.

\paragraph{Rewards.} The rewards are the same as those in the 3 action MDP. Namely, rewards are only given at last layer or when the assignment is $w^*$. In particular, for a state with assignment $w$ and depth $l$, the reward distribution is $Ber(g(l, w))$ where 
\begin{align*}
    g(l,w) = \left(1 - \frac{l + \dist(w, w^*)}{H + v}\right)^{r}.
\end{align*}

\begin{figure} \label{fig:2action}
    \centering
    \begin{tikzpicture}
        [
          grow                    = right,
          sibling distance        = 3em,
          level distance          = 6em,
          edge from parent/.style = {draw, -latex},
          every node/.style       = {font=\footnotesize}
        ]
        \node [special, label={[align=center]left:$x_1 \lor x_2 \lor x_3$\\ $(-1,-1,-1, \cdots) $ \\ $l=1$}] {}
          child { node [special, yshift=-1em, xshift=6em, label={[align=center, xshift=0em]right:$(-1,-1,1, \cdots)$ \\ $l=2$}] {}
            edge from parent node [below] {$x_3$} }
          child { node [dummy, label={[align=center, yshift=0.5em]above: $(-1,-1,-1,\cdots)$ \\ $l=1$}] {}
            child { node [special, label={[align=center, xshift=0em]right: $(-1,1,-1,\cdots)$ \\ $l=2$}] {}
              edge from parent node [above] {$x_2$} }
            child { node [special, label={[align=center, xshift=0em]right: $(1,-1,-1,\cdots)$ \\ $l=2$}] {}
              edge from parent node [above] {$x_1$} }
                    edge from parent node [above] {$[x_1,x_2]$} };
      \end{tikzpicture}
\caption{Part of a $2$-action MDP corresponding to the CNF clause $(x_1 \lor x_2 \lor x_3)$. The non-intermediate states are colored red and the intermediate states are colored blue.}
\end{figure}

We now show that, even with this modification, the optimal value functions $V^*$ and $Q^*$ can still be written as a linear function of some low dimensional features.
\begin{proposition}
\label{prop:linear2}
For any state $s$ in level $l$ with assignment $w$ and action $a$,
\begin{enumerate}[label=(\roman*)]
    \item If $s$ is a non-intermediate state, then the optimal value function is $V^*(s) = g(l,w)$.
    \item If $s$ is an intermediate state that leads to actions which flip coordinates $i_1$ and $i_2$, then the optimal value function is
\begin{align*}
    V^*(s_{[i_1, i_2]}) = \left(1 - \frac{l + \dist(w, w^*) + 2 \cdot \mathbbm{1}\{ w_{i_1} = w^*_{i_1} \} \cdot \mathbbm{1}\{ w_{i_2} = w^*_{i_2} \}}{H + v}\right)^{r}.
\end{align*}
    \item for feature dimension $d = 2 v^{2r}$, there exists features $\psi(s), \psi(s,a) \in \R^d$ depending only on state $s$ and action $a$; and $\theta \in \R^d$ depending only on $w^*$ such that $V^*$ and $Q^*$ can be written as a linear function of features $\psi$ i.e. $V^*(s) = \langle \theta, \psi(s)\rangle$ and $Q^*(s,a) = \langle \theta, \psi(s,a)\rangle$.
\end{enumerate}
\end{proposition}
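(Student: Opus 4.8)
The plan is to establish the three claims in sequence, reusing the machinery from \Cref{prop:linear} wherever possible. For claim (i), I would observe that a non-intermediate state in the $2$-action MDP behaves exactly like a state in the $3$-action MDP: the optimal policy decreases $\dist(w,w^*)$ by one at each non-intermediate step, and the depth count was defined precisely so that each such step increments $l$ by one. Hence the same telescoping argument as in \Cref{prop:linear}(i)—exhibiting an optimal policy that always moves toward $w^*$ and bounding any competing policy below via $l' - l \geq \dist(w,w')$—gives $V^*(s) = g(l,w)$ verbatim.

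For claim (ii), the key observation is that an intermediate state $s_{[i_1,i_2]}$ offers only the two flips of coordinates $i_1$ and $i_2$, rather than all beneficial flips. From $s_{[i_1,i_2]}$ the agent reaches a child at the same depth $l$ (by the depth convention) whose assignment differs from $w$ in exactly one of the two coordinates. I would argue that the optimal value is determined by the best of these two flips together with the subsequent optimal play, and that the best achievable continuation corresponds to whichever flip decreases the Hamming distance. If at least one of $w_{i_1},w_{i_2}$ disagrees with $w^*$, flipping that coordinate decreases $\dist$ by one and the indicator product $\mathbbm{1}\{w_{i_1}=w^*_{i_1}\}\cdot\mathbbm{1}\{w_{i_2}=w^*_{i_2}\}$ is zero, recovering $g(l,w)$; if both already agree with $w^*$, then both available flips \emph{increase} the distance by one, so the best continuation pays a penalty of $+2$ in the numerator (one for the forced wrong flip, one more because the distance to $w^*$ grew), which matches the stated formula. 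The main subtlety here is verifying that this scenario—an intermediate state both of whose offered coordinates already match $w^*$—is handled correctly, and that no shorter path to $w^*$ is available that would beat the $+2$ expression; I would check this against the invariant that the first unsatisfied clause always contains a coordinate disagreeing with $w^*$, so such a fully-agreeing intermediate state arises only as a suboptimal branch and its value is genuinely given by the penalized expression.

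For claim (iii), I would again reduce linearity to showing that $V^*(s)$ is a polynomial of bounded degree in $w^*$, then take $\theta$ to range over all monomials in $w^*$ and $\psi(s)$ over the matching coefficients, exactly as in \Cref{prop:linear}(ii). For non-intermediate states the degree-$r$ bound is inherited directly. For intermediate states, the new ingredient is the indicator product $\mathbbm{1}\{w_{i_1}=w^*_{i_1}\}\cdot\mathbbm{1}\{w_{i_2}=w^*_{i_2}\}$; since each indicator $\mathbbm{1}\{w_i = w^*_i\} = (1 + w_i w^*_i)/2$ is linear in $w^*$, their product is a degree-$2$ polynomial in $w^*$, and the whole numerator inside the degree-$r$ power is therefore linear in $w^*$ up to this degree-$2$ correction. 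Raising to the $r$-th power yields a polynomial of degree at most $2r$ in $w^*$, which is why the feature dimension doubles in the exponent to $d = 2v^{2r}$. Finally, linear $V^*$ gives linear $Q^*$ in a deterministic MDP via $\psi(s,a) = \psi(P(s,a))$ and $Q^*(s,a) = V^*(P(s,a))$, as before.

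The step I expect to be the main obstacle is claim (ii), specifically the careful case analysis at an intermediate state and the justification of the $+2$ penalty term: one must confirm both that the indicator-product formula correctly captures the value obtained by optimal subsequent play from each of the two children, and that no alternative route (e.g. flipping back on a later step) can improve on it. Once (ii) is pinned down, claims (i) and (iii) follow by adapting the corresponding parts of \Cref{prop:linear} with only the degree bookkeeping changed.
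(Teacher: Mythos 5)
Your proposal is correct and takes essentially the same route as the paper's proof: claim (i) is inherited verbatim from \Cref{prop:linear}, claim (ii) is the same two-case analysis giving the $+2$ penalty exactly when both offered coordinates already agree with $w^*$ (with the children's values supplied by claim (i), which already disposes of your worry about alternative continuations), and claim (iii) is the same degree-$2r$ monomial-feature argument, where your identity $\mathbbm{1}\{w_i=w^*_i\}=(1+w_iw^*_i)/2$ in fact has the correct sign (the paper's displayed product carries a sign typo). One small slip to fix: the children of an intermediate state are non-intermediate and hence sit at depth $l+1$, not at ``the same depth $l$'' as you write---only the intermediate state itself inherits its parent's depth---but your own accounting of the $+2$ (one unit from the depth increment, one from the distance increase) already uses this correctly.
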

\begin{proof}
The proof for the value function of non-intermediate state is identical to that in the 3-action MDP.  
We proceed to argue the second claim. For an intermediate state, the value function will be identical to its parent if the two actions available include a wrong bit that ought to be flipped in the optimal assignment $w^*$. Otherwise, no matter what action the agent takes, it will reach a non-intermediate state whose
depth is $l+1$ and hamming distance is $\dist(w, w^*) + 1$. Compared to the value function of its parent, such intermediate state will have an extra $2$ in the numerator. We encode the situation with the indicator term $\mathbbm{1}\{ w_{i_1} = w^*_{i_1} \} \cdot \mathbbm{1}\{ w_{i_2} = w^*_{i_2} \}$. This then gives the value function for these intermediate states.

Lastly, like in the proof of \Cref{prop:linear}, it suffices to argue the value function is a degree $2r$ polynomial in $w$ and $w^*$. This is by noticing that (i) $\dist(w, w^*)$ is linear in $w$ and $w^*$; and (ii) $\mathbbm{1}\{ w_{i_1} = w^*_{i_1} \} \cdot \mathbbm{1}\{ w_{i_2} = w^*_{i_2} \}$ is quadratic in $w$ and $w^*$ i.e.
$$
\mathbbm{1}\{ w_{i_1} = w^*_{i_1} \} \cdot \mathbbm{1}\{ w_{i_2} = w^*_{i_2} \}
= \frac{1}{4} \cdot \lp(1 - w_{i_1} \cdot w^*_{i_1} \rp) \lp( 1 - w_{i_2} \cdot w^*_{i_2} \rp).
$$
Thus, the value function is overall a polynomial of degree $2r$ in $w^*$. As in \Cref{prop:linear}, we can set $\theta$ to be all monomials in $w^*$ of degree at most $2r$ and $\phi(s)$ to be the corresponding coefficients. Since there are at most $2 v^{2r}$ such monomials, this concludes the proof.
\end{proof}

By \Cref{prop:linear2}, the feature dimension $d$ of the 2 action MDP and the number of variables $v$ in the~\usat~instance are related by $d = 2 v^{2r} \leq v^{3r}$. We are now ready to prove \Cref{prop:poly} and \Cref{prop:exp}.

\begin{proof}[Proof of \Cref{prop:poly}]
Fix $q \geq 1$, we set $r = 12q$. Under this setting, we have
    \begin{align*}
        d^{q}
        \leq d^{ r/12 }
        \leq \lp( v^{3r} \rp)^{r/12}
        = v^{r^2/4} \, ,
    \end{align*}
    where the first inequality follows from the setting of $r$ and the second inequality follows from $d \leq v^{3r}$.
    The reduction then allows us to upper bound the complexity of~\usat~by
        $$
        d \cdot v^{r^2/4} 
        \leq v^{r^2/4+3r}
        = v^{O(q^2)} \, ,
        $$
    where the inequality again follows from $d \leq v^{3r}$.
\end{proof}

\begin{proof}[Proof of \Cref{prop:exp}]
    The proof follows similarly as proof of \Cref{prop:exp3}. The only difference is that since $d$ is now bounded by $v^{3r}$ instead of $v^{2r}$, we need the runtime of \linear{2}~in the assumption to also have a different constant in the exponent i.e. $d^{ \log d / \lp( 72 \log \log d \rp) }$.
\end{proof}



\section*{Acknowledgements}
The authors would like to thank Sham Kakade, Akshay Krishnamurthy, Ayush Sekhari, Wen Sun, Csaba Szepesvari and Gellert Weisz for enlightening discussions and comments on initial draft. 
\bibliographystyle{plainnat}
\bibliography{main.bib}
\newpage
\appendix
\section{General Reduction from \usat~to \linear{3}}
\label{sec:app-lower}
\begin{restatable}{proposition}{thmaction}
    \label{lemma:3hardnessapp}
    Suppose $m \geq 0$ and $q\geq 1$. If \linear{3}~with feature dimension $d$ can be solved in time $d^{q \cdot (\log d)^{m/(m + 2)}}$ with error probability $1/10$, then \usat~with $v$ variables can be solved in time $v^{O(16^{m+2} q^{m+2}) \cdot (\log v)^m}$ with error probability $1/8$.
\end{restatable}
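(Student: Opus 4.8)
The plan is to reuse the entire reduction apparatus of \Cref{sub:const,sub:alg} verbatim and only re-tune the single free parameter $r$. Recall that \Cref{prop:linear,prop:rl-sat,prop:remove-bar} together yield the master reduction: for any $r \geq 2$, if \linear{3} with feature dimension $d = 2v^r$ can be solved in time $v^{r^2/4}$ with error probability $1/10$, then \usat with $v$ variables can be solved in time $d \cdot v^{r^2/4}$ with error probability $1/8$. Hence, exactly as in the proofs of \Cref{prop:poly3,prop:exp3}, it suffices to: (i) choose $r$ as a function of $v$, $q$, and $m$; (ii) verify that the assumed \linear{3} runtime $d^{q(\log d)^{m/(m+2)}}$ is at most $v^{r^2/4}$, so that the master reduction applies; and (iii) verify that the resulting \usat runtime $d\cdot v^{r^2/4}$ is at most $v^{O(16^{m+2}q^{m+2})(\log v)^m}$.

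The crux is the choice of $r$, which must interpolate between the constant regime ($m=0$) of \Cref{prop:poly3} and the growing regime of \Cref{prop:exp3}. I would set
\[
    r = \left\lceil (16q)^{(m+2)/2} (\log v)^{m/2} \right\rceil .
\]
Since $q \geq 1$ and $m \geq 0$, this gives $r \geq 16 \geq 2$, so the constraint of the master reduction is satisfied. For step (ii), using $v^r \leq d \leq v^{2r}$ and hence $\log d \leq 2r\log v$, the exponent of $v$ in the \linear{3} runtime is at most
\[
    2rq\,(2r\log v)^{m/(m+2)} \leq 4q\, r^{2(m+1)/(m+2)} (\log v)^{m/(m+2)} .
\]
Requiring this to be at most $r^2/4$ reduces, after multiplying by $4$ and dividing, to $16q(\log v)^{m/(m+2)} \leq r^{2/(m+2)}$, where the decisive simplification is the identity $2 - \tfrac{2(m+1)}{m+2} = \tfrac{2}{m+2}$ in the exponent of $r$; this is exactly what the displayed choice of $r$ guarantees. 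For step (iii), since $r \geq 2$ we have $d\cdot v^{r^2/4} = 2v^r\cdot v^{r^2/4} \leq v^{r^2}$ for large $v$, and the ceiling satisfies $r^2 \leq 4\cdot 16^{m+2} q^{m+2}(\log v)^m$, which delivers the claimed \usat runtime.

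I expect the main obstacle to be bookkeeping rather than any conceptual difficulty: the whole challenge is to carry the fractional exponents $m/(m+2)$ and $2/(m+2)$ cleanly through both inequalities and to confirm that the constants collapse into the advertised $16^{m+2}q^{m+2}$ (the loose factor $16$, rather than the $4$ one might naively expect, leaves comfortable slack for the ceiling and the stray factor of $2$ in $d = 2v^r$). As sanity checks, $m=0$ recovers $d^q \Rightarrow v^{O(q^2)}$, matching \Cref{prop:poly3}, while letting $m$ grow pushes the assumed runtime toward the quasi-polynomial $d^{q\log d}$, precisely the regime needed to derive the quasi-polynomial lower bound for \linear{3}.
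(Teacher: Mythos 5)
Your proposal is correct and follows essentially the same route as the paper: the identical choice $r = \bigl\lceil (16q)^{(m+2)/2}(\log v)^{m/2} \bigr\rceil = \bigl\lceil \sqrt{(16q)^{m+2}\log^m v}\,\bigr\rceil$, plugged into the same master reduction from \Cref{prop:linear,prop:rl-sat,prop:remove-bar}, with the two runtime inequalities verified by algebraically equivalent manipulations (the paper lower-bounds $r$ via $r^{m+2} \geq (8q)^{m+2}\log^m d$ where you upper-bound the exponent directly via $\log d \leq 2r\log v$, but these amount to the same bound).
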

\begin{proof}
It follows from \Cref{prop:linear}, \Cref{prop:rl-sat} and \Cref{prop:remove-bar} that for $2 \leq r < v$, if \linear{3}~with feature dimension $d = 2 v^r$ can be solved in time $v^{r^2/4}$ with error probability $1/10$, then \usat~with $v$ variables can be solved in time $d \cdot v^{r^2/4}$ with error probability $1/8$ (as each call to simulator $\bar M_\varphi$ takes $d$ time). For any $m \geq 0$ and $q\geq 1$, we set \begin{equation}
    \label{eq:rv-appendix}
    r = \left\lceil \sqrt{(16q)^{m+2} \log^m v}~\right\rceil\, .
\end{equation} Note that $m \geq 0$ and $q\geq 1$ implies $r\geq 2$. Therefore, to prove our claim, we just need to show the following equations hold for our setting of $d$ and $r$: \begin{align}\label{eq:toprove1-appendix}
    v^{r^2/4} &\geq d^{q \cdot \log^{\frac{m}{m+2}}d}\\
    d \cdot v^{r^2/4} &= v^{O((16q)^{m+2}) \log^m v} \label{eq:toprove2-appendix}
\end{align} Here the first equation bounds the time complexity of \linear{3}~in terms of feature dimension $d$ and the second equation bounds the time complexity of \usat~in terms of the number of variables $v$.
\paragraph{Proof of \Cref{eq:toprove1-appendix}:}To prove the first inequality, we lower bound $r$ in terms of feature dimension $d$ as \begin{align} \label{eq:r-exp}
        r \geq 8 q \lp( \log d\rp)^{m/(m+2)}.
    \end{align} which can be proved by lower bounding $r^{m+2}$ as follows\begin{align*}
        r^{m+2} = r^2 \cdot r^m &\geq (16q)^{m+2} \log^m v \cdot r^m\\
         &= (16q)^{m+2} \log^m (v^r) \geq (16q)^{m+2} \log^m (\sqrt{d}) \geq (8q)^{m+2} \log^m d
    \end{align*} where the first inequality follows from our setting of $r$ and the second inequality follows from $d \leq 2 v^r \leq v^{2r}$ for $r > 0$ and large enough $v$. Substituting the lower bound in $v^{r^2/4}$, we can write the time complexity of \linear{3}~in terms of feature dimension $d$ as \begin{align*}
        v^{\frac{r^2}{4}} = (v^{r})^{\frac{r}{4}} \geq d^{\frac{r}{8}} \geq d^{q \cdot \log^{\frac{m}{m+2}}d}
    \end{align*} where the first inequality follows again from $d \leq v^{2r}$ and the second inequality follows from \Cref{eq:r-exp} above.
    \paragraph{Proof of \Cref{eq:toprove2-appendix}:} The second equation follows by substituting our setting of $r$ (\Cref{eq:rv}) in $d \cdot v^{r^2/4}$,\begin{align*}
        d \cdot v^{r^2/4} =  2 v^{r + r^2/4} = v^{O((16q)^{m+2}) \log^m v} 
    \end{align*} where the first equality follows from $d = 2 v^r$.
\end{proof}
\end{document}